\pdfoutput = 1

\documentclass[10pt]{article}

\usepackage[usenames,dvipsnames,svgnames,table]{xcolor}
\usepackage{amsmath, amsthm, amssymb, bbm, bm}
\usepackage{enumerate}
\usepackage{graphicx}
\usepackage{float}      
\usepackage{wrapfig, subcaption}
\usepackage[margin=0cm]{caption}
\usepackage[titletoc, toc]{appendix}
\usepackage{blindtext}
\usepackage{tabularx}
\usepackage{booktabs,colortbl}
\usepackage{nicefrac}

\usepackage[boxruled, vlined, linesnumbered]{algorithm2e}
\SetAlFnt{\small}
\SetAlCapFnt{\small}
\SetAlCapNameFnt{\small}
\usepackage{algorithmic}
\algsetup{linenosize=\tiny}

\let\oldnl\nl
\newcommand{\nonl}{\renewcommand{\nl}{\let\nl\oldnl}}

\usepackage[bookmarks=false]{hyperref}
\hypersetup{
colorlinks=true, linkcolor=Sepia, citecolor=Sepia, filecolor=magenta, urlcolor=NavyBlue,
}
\usepackage{url, bookmark}

\newcommand{\reals}{\mathbb{R}}

\newcommand{\abs}[1]{\ensuremath \left| #1 \right|}
\newcommand{\norm}[1]{\ensuremath \lVert#1\rVert}
\newcommand{\given}{\, \vert \,}

\newcommand{\ag}[1]{\ensuremath \left\langle#1\right\rangle}

\DeclareMathOperator*{\argmin}{argmin}

\newcommand{\aeq}[1]{\begin{align} #1 \end{align}}
\newcommand{\aeqs}[1]{\begin{align*} #1 \end{align*}}
\newcommand{\beq}[1]{\begin{equation}#1\end{equation}}

\newcommand{\trm}[1]{\mathrm{#1}}

\newcommand{\la}{\ \leftarrow\ }

\providecommand\f[2]{\ensuremath \frac{#1}{#2}}
\providecommand\rbrac[1]{\ensuremath \left(#1\right)}
\providecommand\sqbrac[1]{\ensuremath \left[#1\right]}

\newtheorem{theorem}{Theorem}

\newtheorem{lemma}[theorem]{Lemma}

\theoremstyle{definition}
\newtheorem{definition}[theorem]{Definition}

\newtheorem{remark}[theorem]{Remark}

\renewcommand{\P}{\trm{P}}
\newcommand{\E}{\mathbb{E}}
\newcommand{\corr}{\textrm{cross-corr}\ }

\renewcommand{\a}{\alpha}

\newcommand{\e}{\epsilon}
\renewcommand{\b}{\beta}
\newcommand{\g}{\gamma}

\usepackage{iclr2017_conference}
\usepackage{mathptmx}

\linespread{1.05}




\newcommand{\entropysgd}{\mathrm{Entropy}\textrm{-}\mathrm{SGD}}
\newcommand{\entropyadam}{\mathrm{Entropy}\textrm{-}\mathrm{Adam}}
\newcommand{\minibatch}[1]{\Xi^{#1}}
\newcommand{\mnistfc}{\textrm{mnistfc}}
\newcommand{\smallmnistfc}{\textrm{small-mnistfc}}
\newcommand{\charlstm}{\textrm{char-LSTM}}
\newcommand{\ptblstm}{\textrm{PTB-LSTM}}
\newcommand{\lenet}{\textrm{LeNet}}
\newcommand{\allcnn}{\textrm{All-CNN-BN}}

\title{Entropy-SGD: Biasing Gradient Descent Into Wide Valleys
\thanks{Code:~\href{https://github.com/ucla-vision/entropy-sgd}{https://github.com/ucla-vision/entropy-sgd}}}
\renewcommand\footnotemark{}

\author{Pratik Chaudhari$^{1}$, Anna Choromanska$^{2}$, Stefano Soatto$^{1}$, Yann LeCun$^{3,4}$, Carlo Baldassi$^{5}$,\\[0.03in]
\textbf{Christian Borgs$^{6}$, Jennifer Chayes$^{6}$,
Levent Sagun$^{3}$,
Riccardo Zecchina$^{5}$}\\[0.05in]
$^{1}$ Computer Science Department, University of California, Los Angeles\\
$^{2}$ Department of Electrical and Computer Engineering, New York University\\
$^{3}$ Courant Institute of Mathematical Sciences, New York University\\
$^{4}$ Facebook AI Research, New York\\
$^{5}$ Dipartimento di Scienza Applicata e Tecnologia, Politecnico di Torino\\
$^{6}$ Microsoft Research New England, Cambridge\\ [0.05in]
{\footnotesize
Email:\ \href{mailto:pratikac@ucla.edu}{pratikac@ucla.edu},
\href{mailto:ac5455@nyu.edu}{ac5455@nyu.edu},
\href{mailto:soatto@ucla.edu}{soatto@ucla.edu},
\href{mailto:yann@cs.nyu.edu}{yann@cs.nyu.edu},
\href{mailto:carlo.baldassi@polito.it}{carlo.baldassi@polito.it},}\\[0.03in]
{\footnotesize
\hspace{0.33in} \href{mailto:borgs@microsoft.com}{borgs@microsoft.com},
\href{mailto:jchayes@microsoft.com}{jchayes@microsoft.com},
\href{mailto:sagun@cims.nyu.edu}{sagun@cims.nyu.edu},
\href{mailto:riccardo.zecchina@polito.it}{riccardo.zecchina@polito.it}
}}

\setlength{\marginparwidth}{1in}

\newcommand{\ignore}[1]{}


\graphicspath{{./fig/}}

\iclrfinalcopy
\begin{document}

\maketitle

\begin{abstract}
This paper proposes a new optimization algorithm called Entropy-SGD for training deep neural networks that is motivated by the local geometry of the energy landscape. Local extrema with low generalization error have a large proportion of almost-zero eigenvalues in the Hessian with very few positive or negative eigenvalues. We leverage upon this observation to construct a local-entropy-based objective function that favors well-generalizable solutions lying in large flat regions of the energy landscape, while avoiding poorly-generalizable solutions located in the sharp valleys. Conceptually, our algorithm resembles two nested loops of SGD where we use Langevin dynamics in the inner loop to compute the gradient of the local entropy before each update of the weights. We show that the new objective has a smoother energy landscape and show improved generalization over SGD using uniform stability, under certain assumptions. Our experiments on convolutional and recurrent networks demonstrate that Entropy-SGD compares favorably to state-of-the-art techniques in terms of generalization error and training time.
\end{abstract}

\section{Introduction}
\label{s:intro}


This paper presents a new optimization tool for deep learning designed to exploit the local geometric properties of the objective function. Consider the histogram we obtained in Fig.~\ref{fig:lenet_hessian} showing the spectrum of the Hessian at an extremum discovered by Adam~\citep{kingma2014adam} for a convolutional neural network on MNIST~\citep{lecun1998gradient} ($\approx 47,000$ weights, cf.\@ Sec.~\ref{ss:expt:universality}). It is evident that:
\begin{enumerate}[(i)]
\item a large number of directions ($\approx 94\%$) have near-zero eigenvalues (magnitude less than $10^{-4}$),
\item positive eigenvalues (right inset) have a long tail with the largest one being almost $40$,
\item negative eigenvalues (left inset), which are directions of descent that the optimizer missed, have a much faster decay (the largest negative eigenvalue is only $-0.46$).
\end{enumerate}
Interestingly, this trend is not unique to this particular network. Rather, its qualitative properties are shared across a variety of network architectures, network sizes, datasets or optimization algorithms (refer to Sec.~\ref{s:expt} for more experiments). Local minima that generalize well and are discovered by gradient descent lie in ``wide valleys'' of the energy landscape, rather than in sharp, isolated minima. For an intuitive understanding of this phenomenon, imagine a Bayesian prior concentrated at the minimizer of the expected loss, the marginal likelihood of wide valleys under this prior is much higher than narrow, sharp valleys even if the latter are close to the global minimum in training loss. Almost-flat regions of the energy landscape are robust to data perturbations, noise in the activations, as well as perturbations of the parameters, all of which are widely-used techniques to achieve good generalization. This suggests that wide valleys should result in better generalization and, indeed, standard optimization algorithms in deep learning seem to discover exactly that --- without being explicitly tailored to do so. For another recent analysis of the Hessian, see the parallel work of~\citet{sagun2016singularity}.

\begin{figure}[tbh]
\centering
\includegraphics[width=\textwidth]{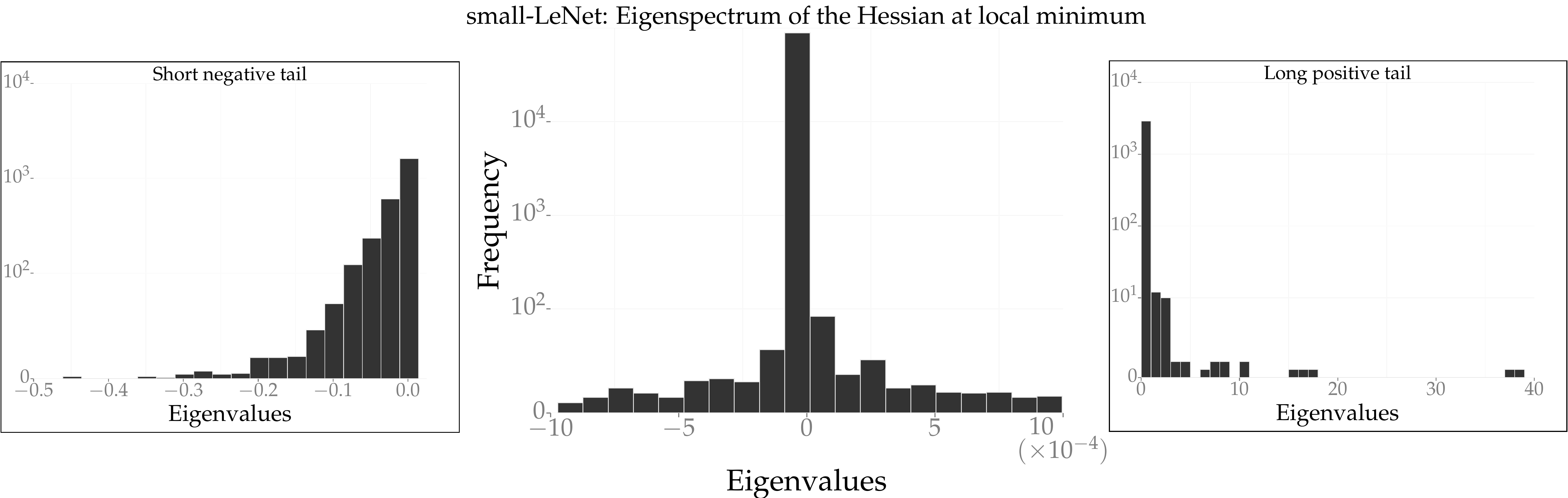}
\caption{\small Eigenspectrum of the Hessian at a local minimum of a CNN on MNIST (two independent runs). \textbf{Remark:} The central plot shows the eigenvalues in a small neighborhood of zero whereas the left and right insets show the entire tails of the eigenspectrum.}
\label{fig:lenet_hessian}
\end{figure}

Based on this understanding of how the local geometry looks at the end of optimization, can we modify SGD to actively seek such regions? Motivated by the work of~\citet{baldassi2015subdominant} on shallow networks, instead of minimizing the original loss $f(x)$, we propose to maximize
$$
F(x, \g) = \log \int_{x' \in \reals^n}\ \exp \rbrac{-f(x') - \f{\g}{2}\ \norm{x-x'}^2_2}\ dx'.
$$
The above is a log-partition function that measures both the depth of a valley at a location  $x \in \reals^n$, and its flatness through the entropy of $f(x')$; we call it ``local entropy'' in analogy to the free entropy used in statistical physics. The $\entropysgd$ algorithm presented in this paper employs stochastic gradient Langevin dynamics (SGLD) to approximate the gradient of local entropy. Our algorithm resembles two nested loops of SGD: the inner loop consists of SGLD iterations while the outer loop updates the parameters. We show that the above modified loss function results in a smoother energy landscape defined by the hyper-parameter $\g$ which we can think of as a ``scope'' that seeks out valleys of specific widths. Actively biasing the optimization towards wide valleys in the energy landscape results in better generalization error. We present experimental results on fully-connected and convolutional neural networks (CNNs) on the MNIST and CIFAR-10~\citep{krizhevsky2009learning} datasets and recurrent neural networks (RNNs) on the Penn Tree Bank dataset (PTB)~\citep{marcus1993building} and character-level text prediction. Our experiments show that $\entropysgd$ scales to deep networks used in practice, obtains comparable generalization error as competitive baselines and also trains much more quickly than SGD (we get a $2\trm{x}$ speed-up over SGD on RNNs).

\section{Related work}
\label{s:prior_work}

Our above observation about the spectrum of Hessian (further discussed in Sec.~\ref{s:expt}) is similar to results on a perceptron model in~\citet{dauphin2014identifying} where the authors connect the loss function of a deep network to a high-dimensional Gaussian random field. They also relate to earlier studies such as~\citet{Baldi:1989:NNP:70359.70362,Fyodorov2007,Bray2007} which show that critical points with high training error are exponentially likely to be saddle points with many negative directions and all local minima are likely to have error that is very close to that of the global minimum. The authors also argue that convergence of gradient descent is affected by the proliferation of saddle points surrounded by high error plateaus --- as opposed to multiple local minima. One can also see this via an application of Kramer's law: the time spent by diffusion is inversely proportional to the smallest negative eigenvalue of the Hessian at a saddle point~\citep{bovier2006metastability}.

The existence of multiple --- almost equivalent --- local minima in deep networks has been predicted using a wide variety of theoretical analyses and empirical observations, e.g., papers such as~\citet{spinglass2015,DBLP:conf/colt/ChoromanskaLA15,chaudhari2015trivializing} that build upon results from statistical physics as also others such as~\citet{haeffele2015global} and~\citet{janzamin2015beating} that obtain similar results for matrix and tensor factorization problems. Although assumptions in these works are somewhat unrealistic in the context of deep networks used in practice, similar results are also true for linear networks which afford a more thorough analytical treatment~\citep{DBLP:journals/corr/SaxeMG13}. For instance,~\citet{soudry2016no} show that with mild over-parameterization and dropout-like noise, training error for a neural network with one hidden layer and piece-wise linear activation is zero at every local minimum. All these results suggest that the energy landscape of deep neural networks should be easy to optimize and they more or less hold in practice --- it is easy to optimize a prototypical deep network to near-zero loss \emph{on the training set}~\citep{hardt2015train,DBLP:journals/corr/GoodfellowV14}.

Obtaining good \emph{generalization} error, however, is challenging: complex architectures are sensitive to initial conditions and learning rates~\citep{sutskever2013importance} and even linear networks~\citep{kawaguchi2016deep} may have degenerate and hard to escape saddle points~\citep{ge2015escaping,anandkumar2016efficient}. Techniques such as adaptive~\citep{duchi2011adaptive} and annealed learning rates, momentum~\citep{tieleman2012lecture}, as well as architectural modifications like dropout~\citep{srivastava2014dropout}, batch-normalization~\citep{ioffe2015batch,cooijmans2016recurrent}, weight scaling~\citep{salimans2016weight} etc. are different ways of tackling this issue by making the underlying landscape more amenable to first-order algorithms. However, the training process often requires a combination of such techniques and it is unclear beforehand to what extent each one of them helps.

Closer to the subject of this paper are results by~\citet{baldassi2015subdominant,baldassi2016unreasonable,baldassi2016multilevel} who show that the energy landscape of shallow networks with discrete weights is characterized by an exponential number of isolated minima and few very dense regions with lots of local minima close to each other. These dense local minima can be shown to generalize well for random input data; more importantly, they are also accessible by efficient algorithms using a novel measure called ``robust ensemble'' that amplifies the weight of such dense regions. The authors use belief propagation to estimate local entropy for simpler models such as committee machines considered there. A related work in this context is EASGD~\citep{zhang2015deep} which trains multiple deep networks in parallel and modulates the distance of each worker from the ensemble average. Such an ensemble training procedure enables improved generalization by ensuring that different workers land in the same wide valley and indeed, it turns out to be closely related to the replica theoretic analysis of~\citet{baldassi2016unreasonable}.

Our work generalizes the local entropy approach above to modern deep networks with continuous weights. It exploits the same phenomenon of wide valleys in the energy landscape but does so without incurring the hardware and communication complexity of replicated training or being limited to models where one can estimate local entropy using belief propagation. The enabling technique in our case is using Langevin dynamics for estimating the gradient of local entropy, which can be done efficiently even for large deep networks using mini-batch updates.

Motivated by the same final goal, viz. flat local minima, the authors in~\citet{hochreiter1997flat} introduce hard constraints on the training loss and the width of local minima and show using the Gibbs formalism~\citep{haussler1997mutual} that this leads to improved generalization. As the authors discuss, the effect of hyper-parameters for the constraints is intricately tied together and they are difficult to choose even for small problems. Our local entropy based objective instead naturally balances the energetic term (training loss) and the entropic term (width of the valley). The role of $\g$ is clear as a focusing parameter (cf. Sec.~\ref{s:scoping}) and effectively exploiting this provides significant computational advantages. Among other conceptual similarities with our work, let us note that local entropy in a flat valley is a direct measure of the width of the valley which is similar to their usage of Hessian, while the Gibbs variant to averaging in weight space (Eqn. 33 of~\citet{hochreiter1997flat}) is similar to our Eqn.~\eqref{eq:entropysgd_grad}. Indeed, Gibbs formalism used in their analysis is a very promising direction to understanding generalization in deep networks~\citep{zhang2016understanding}.

Homotopy continuation methods convolve the loss function to solve sequentially refined optimization problems~\citep{allgower2012numerical,mobahi2015link}, similarly, methods that perturb the weights or activations to average the gradient~\citep{gulcehre2016mollifying} do so with an aim to smooth the rugged energy landscape. Such smoothing is however very different from local entropy. For instance, the latter places more weight on wide local minima even if they are much shallower than the global minimum (cf. Fig.~\ref{fig:entropyfig}); this effect cannot be obtained by smoothing. In fact, smoothing can introduce an artificial minimum between two nearby sharp valleys which is detrimental to generalization. In order to be effective, continuation techniques also require that minimizers of successively smaller convolutions of the loss function lie close to each other~\citep{DBLP:conf/icml/HazanLS16}; it is not clear whether this is true for deep networks. Local entropy, on the other hand, exploits wide minima which have been shown to exist in a variety of learning problems~\citep{monasson1995weight,cocco1996analytical}. Please refer to Appendix~\ref{s:app:connection_variational} for a more elaborate discussion as well as possible connections to stochastic variational inference~\citep{blei2016variational}.

\section{Local entropy}
\label{s:local_entropy}

We first provide a simple intuition for the concept of local entropy of an energy landscape. The discussion in this section builds upon the results of~\citet{baldassi2016unreasonable} and extends it for the case of continuous variables. Consider a cartoon energy landscape in Fig.~\ref{fig:entropyfig} where the x-axis denotes the configuration space of the parameters. We have constructed two local minima: a shallower although wider one at $x_{\trm{robust}}$ and a very sharp global minimum at $x_{\trm{non-robust}}$. Under a Bayesian prior on the parameters, say a Gaussian of a fixed variance at locations $x_{\trm{robust}}$ and $x_{\trm{non-robust}}$ respectively, the wider local minimum has a higher marginalized likelihood than the sharp valley on the right.

\begin{wrapfigure}{r}{0.4\textwidth}
\centering
\includegraphics[width=0.4 \textwidth]{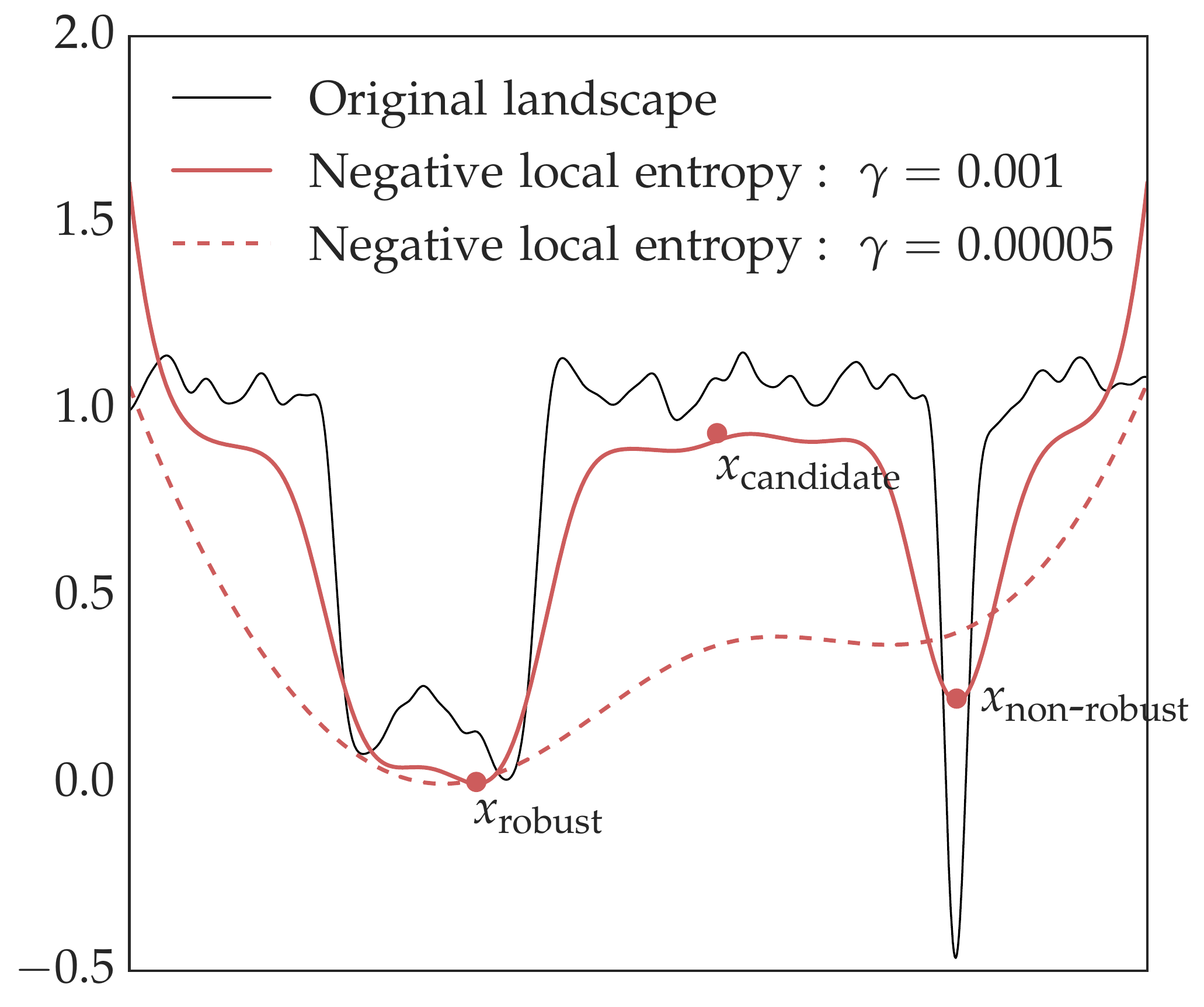}
\caption{\small Local entropy concentrates on wide valleys in the energy landscape.}
\label{fig:entropyfig}
\vspace*{-0.1in}
\end{wrapfigure}

The above discussion suggests that parameters that lie in wider local minima like $x_{\trm{robust}}$, which may possibly have a higher loss than the global minimum, should generalize better than the ones that are simply at the global minimum. In a neighborhood of $x_{\trm{robust}}$, ``local entropy'' as introduced in Sec.~\ref{s:intro} is large because it includes the contributions from a large region of good parameters; conversely, near $x_{\trm{non-robust}}$, there are fewer such contributions and the resulting local entropy is low. The local entropy thus provides a way of picking large, approximately flat, regions of the landscape over sharp, narrow valleys in spite of the latter possibly having a lower loss. Quite conveniently, the local entropy is also computed from the partition function with a local re-weighting term.

Formally, for a parameter vector $x \in \reals^n$, consider a Gibbs distribution corresponding to a given energy landscape $f(x)$:
\begin{equation}
    \P(x;\ \b) = Z_\b^{-1}\ \exp\ \rbrac{-\b\ f(x)};
    \label{eq:gibbs_original}
\end{equation}
where $\b$ is known as the inverse temperature and $Z_\b$ is a normalizing constant, also known
as the partition function. As $\b \to \infty$, the probability distribution above concentrates on the global minimum of $f(x)$ (assuming it is unique) given as:
\begin{equation}
    x^* = \argmin_x\ f(x),
    \label{eq:argmin_generic}
\end{equation}
which establishes the link between the Gibbs distribution and a generic optimization problem~\eqref{eq:argmin_generic}. We would instead like the probability distribution --- and therefore the underlying optimization problem --- to focus on flat regions such as $x_{\textrm{robust}}$ in Fig.~\ref{fig:entropyfig}. With this in mind, let us construct a modified Gibbs distribution:
\begin{equation}
    \P(x';\ x, \b, \g) = Z_{x,\b,\ \g}^{-1}\ \exp\ \rbrac{-\b\ f(x') - \b\ \f{\g}{2}\ \norm{x-x'}^2_2}.
    \label{eq:gibbs_modified}
\end{equation}
The distribution in~\eqref{eq:gibbs_modified} is a function of a dummy variable $x'$ and is parameterized by the original location $x$. The parameter $\g$ biases the modified distribution~\eqref{eq:gibbs_modified} towards $x$; a large $\g$ results in a $P(x';\ x,\b,\g)$ with all its mass near $x$ irrespective of the energy term $f(x')$. For small values of $\g$, the term $f(x')$ in the exponent dominates and the modified distribution is similar to the original Gibbs distribution in~\eqref{eq:gibbs_original}. We will set the inverse temperature $\b$ to $1$ because $\g$ affords us similar control on the Gibbs distribution.

\begin{definition}[\textbf{Local entropy}]
\label{def:local_entropy}
The local free entropy of the Gibbs distribution in~\eqref{eq:gibbs_original}, colloquially called ``local entropy'' in the sequel and denoted by $F(x, \g)$, is defined as the log-partition function of modified Gibbs distribution~\eqref{eq:gibbs_modified}, i.e.,
\begin{align}
    F(x, \g) &= \log Z_{x, 1, \g} \notag\\
    &= \log \int_{x'}\ \exp \Big(-f(x') - \f{\g}{2}\ \norm{x-x'}^2_2 \Big)\ dx'.
    \label{eq:local_entropy}
\end{align}
\end{definition}
The parameter $\g$ is used to focus the modified Gibbs distribution upon a local neighborhood of $x$ and we call it a ``scope'' henceforth.

\paragraph{Effect on the energy landscape:}
Fig.~\ref{fig:entropyfig} shows the negative local entropy $-F(x, \g)$ for two different values of $\g$. We expect $x_{\textrm{robust}}$ to be more robust than $x_{\textrm{non-robust}}$ to perturbations of data or parameters and thus generalize well and indeed, the negative local entropy in Fig.~\ref{fig:entropyfig} has a \textbf{global} minimum near $x_{\textrm{robust}}$. For low values of $\g$, the energy landscape is significantly smoother than the original landscape and still maintains our desired characteristic, viz. global minimum at a wide valley. As $\g$ increases, the local entropy energy landscape gets closer to the original energy landscape and they become equivalent at $\g \to \infty$. On the other hand, for very small values of $\g$, the local entropy energy landscape is almost uniform.

\paragraph{Connection to classical entropy:}
The quantity we have defined as local entropy in Def.~\ref{def:local_entropy} is different from classical entropy which counts the number of likely configurations under a given distribution. For a continuous parameter space, this is given by
$$
    S(x, \b, \g) = - \int_{x'}\ \log \P(x';\ x, \b, \g)\ d \P(x';\ x, \b, \g)
$$
for the Gibbs distribution in~\eqref{eq:gibbs_modified}. Minimizing classical entropy however does not differentiate between flat regions that have very high loss versus flat regions that lie deeper in the energy landscape. For instance in Fig.~\ref{fig:entropyfig}, classical entropy is smallest in the neighborhood of $x_{\textrm{candidate}}$ which is a large region with very high loss on the training dataset and is unlikely to generalize well.
\ignore{We could minimize a modified loss function of the form $f(x) + \lambda S(x, \g)$ whose gradient can be computed to be
$$
    \nabla \Big(f(x) + \lambda S(x, \g)\Big) = \nabla f(x) - \lambda \g\ \corr (g(x'),\ x-x');
$$
where we have defined the cross-correlation as
$$
    \corr (g(x'),\ x-x') := \ag{g(x')\ (x-x')} - \ag{g(x')}\ \ag{x-x'};
$$
with $g(x') = f(x') + \f{\g}{2}\ \norm{x-x'}^2_2$. The gradient of the entropy can again be estimated using Langevin dynamics. Practically, one then has to modulate the hyper-parameter $\lambda$ during the course of the training to first use the gradient $\nabla f(x)$ to make progress and then turn on the entropy term to scope dense clusters.}

\section{Entropy-guided SGD}
\label{s:entropysgd}

Simply speaking, our $\entropysgd$ algorithm minimizes the negative local entropy from Sec.~\ref{s:local_entropy}. This section discusses how the gradient of local entropy can be computed via Langevin dynamics. The reader will see that the resulting algorithm has a strong flavor of ``SGD-inside-SGD'': the outer SGD updates the parameters, while an inner SGD estimates the gradient of local entropy.

Consider a typical classification setting, let $x \in \reals^n$ be the weights of a deep neural network and $\xi_k \in \Xi$ be samples from a dataset $\Xi$ of size $N$. Let $f(x; \xi_k)$ be the loss function, e.g., cross-entropy of the classifier on a sample $\xi_k$. The original optimization problem is:
\begin{equation}
    x^* = \argmin_x\ \f{1}{N}\ \sum_{k=1}^N\ f(x;\ \xi_k);
    \label{eq:dnn_objective}
\end{equation}
where the objective $f(x, \xi_k)$ is typically a non-convex function in both the weights $x$ and the samples $\xi_k$. The $\entropysgd$ algorithm instead solves the problem
\begin{equation}
    x^*_{\entropysgd} = \argmin_x\ -F(x, \gamma;\ \Xi);
    \label{eq:entropysgd_objective}
\end{equation}
where we have made the dependence of local entropy $F(x,\g)$ on the dataset $\Xi$ explicit.

\subsection{Gradient of local entropy}
\label{ss:grad_local_entropy}

The gradient of local entropy over a randomly sampled mini-batch of $m$ samples denoted by $\xi_{\ell_i} \in \minibatch{\ell}$ for $i \leq m$ is easy to derive and is given by
\begin{equation}
    -\nabla_x F\rbrac{x, \g;\ \minibatch{\ell}} = \g\ \rbrac{x - \ag{x';\ \minibatch{\ell}}};
    \label{eq:entropysgd_grad}
\end{equation}
where the notation $\ag{\cdot}$ denotes an expectation of its arguments (we have again made the dependence on the data explicit) over a Gibbs distribution of the original optimization problem modified to focus on the neighborhood of $x$; this is given by
\begin{equation}
    \P(x';\ x, \g) \propto\ \exp \sqbrac{-\rbrac{\f{1}{m}\ \sum_{i=1}^m\ f\rbrac{x';\ \xi_{\ell_i}}} - \f{\g}{2}\ \norm{x-x'}_2^2}.
    \label{eq:entropysgd_modified_gibbs}
\end{equation}
Computationally, the gradient in~\eqref{eq:entropysgd_grad} involves estimating $\ag{x';\ \minibatch{\ell}}$ with the current weights fixed to $x$. This is an expectation over a Gibbs distribution and is hard to compute. We can however approximate it using Markov chain Monte-Carlo (MCMC) techniques. In this paper, we use stochastic gradient Langevin dynamics (SGLD)~\citep{welling2011bayesian} that is an MCMC algorithm for drawing samples from a Bayesian posterior and scales to large datasets using mini-batch updates. Please see Appendix~\ref{s:app:langevin} for a brief overview of SGLD. For our application, as lines 3-6 of Alg.~\ref{alg:entropysgd} show, SGLD resembles a few iterations of SGD with a forcing term $-\g(x-x')$ and additive gradient noise.

We can obtain some intuition on how $\entropysgd$ works using the expression for the gradient: the term $\ag{x';\ \cdot}$ is the average over a locally focused Gibbs distribution and for two local minima in the neighborhood of $x$ roughly equivalent in loss, this term points towards the wider one because $\ag{x';\ \cdot}$ is closer to it. This results in a net gradient that takes SGD towards wider valleys. Moreover, if we unroll the SGLD steps used to compute $\rbrac{x - \ag{x'; \cdot}}$ (cf. line 5 in Alg.~\ref{alg:entropysgd}), it resembles one large step in the direction of the (noisy) average gradient around the current weights $x$ and $\entropysgd$ thus looks similar to averaged SGD in the literature~\citep{polyak1992acceleration,bottou2012stochastic}. These two phenomena intuitively explain the improved generalization performance of $\entropysgd$.

\subsection{Algorithm and Implementation details}
\label{ss:alg}

Alg.~\ref{alg:entropysgd} provides the pseudo-code for one iteration of the $\entropysgd$ algorithm. At each iteration, lines $3$-$6$ perform $L$ iterations of Langevin dynamics to estimate $\mu = \ag{x'; \minibatch{\ell}}$. The weights $x$ are updated with the modified gradient on line $7$.

\begin{center}
\begin{minipage}{0.7 \textwidth}
\IncMargin{0.04in}
\begin{algorithm}[H]
    \SetKwInOut{Input}{Input}
    \SetKwInOut{HyperParameters}{Hyper-parameters}

    \small
    \Input{\quad $\textrm{current weights}\ x$, $\textrm{Langevin iterations}\ L$}
    \HyperParameters{\quad $\textrm{scope}\ \g$, $\textrm{learning rate}\ \eta$, $\textrm{SGLD step size}\ \eta'$}
    \vspace{0.1in}
    \nonl \textrm{// SGLD iterations}\;
    \vspace{0.025in}
    $x', \mu \la x$\;
    \For{$\ell \leq L$}
    {
        $\minibatch{\ell} \la \textrm{sample mini-batch}$\;
        \vspace{0.03in}
        $dx' \la \f{1}{m}\ \sum_{i=1}^m\ \nabla_{x'} f\rbrac{x';\ \xi_{\ell_i}} - \g\ \rbrac{x - x'}$\;
        \vspace{0.03in}
        $x' \la x' - \eta'\ dx' + \sqrt{\eta'}\ \e\ \trm{N}(0, \trm{I})$\;
        \vspace{0.03in}
        $\mu \la (1-\a) \mu + \a\ x'$\;
    }
    \vspace{0.1in}
    \nonl \textrm{// Update weights}\;
    \vspace{0.03in}
    $x \la x - \eta\ \g\ (x-\mu)$
    \caption{$\entropysgd$ algorithm}
    \label{alg:entropysgd}
\end{algorithm}
\DecMargin{0.04in}
\end{minipage}
\end{center}

Let us now discuss a few implementation details. Although we have written Alg.~\ref{alg:entropysgd} in the classical SGD setup, we can easily modify it to include techniques such as momentum and gradient pre-conditioning~\citep{duchi2011adaptive} by changing lines $5$ and $7$. In our experiments, we have used SGD with Nesterov's momentum~\citep{sutskever2013importance} and Adam for outer and inner loops with similar qualitative results.
We use exponential averaging to estimate $\mu$ in the SGLD loop (line 6) with $\a = 0.75$ so as to put more weight on the later samples, this is akin to a burn-in in standard SGLD.

We set the number of SGLD iterations to $L = [5, 20]$ depending upon the complexity of the dataset. The learning rate $\eta'$ is fixed for all our experiments to values between $\eta' \in\ [0.1, 1]$. We found that annealing $\eta'$ (for instance, setting it to be the same as the outer learning rate $\eta$) is detrimental; indeed a small learning rate leads to poor estimates of local entropy towards the end of training where they are most needed. The parameter $\e$ in SGLD on line 5 is the thermal noise and we fix this to $\e \in [10^{-4},10^{-3}]$. Having thus fixed $L, \e$ and $\eta'$, an effective heuristic to tune the remaining parameter $\g$ is to match the magnitude of the gradient of the local entropy term, viz. $\g\ (x - \mu)$, to the gradient for vanilla SGD, viz. $m^{-1}\ \sum_{i=1}^m\ \nabla_x f(x;\ \xi_{\ell_i})$.

\subsection{Scoping of $\gamma$}
\label{s:scoping}

The scope $\g$ is fixed in Alg.~\ref{alg:entropysgd}. For large values of $\g$, the SGLD updates happen in a small neighborhood of the current parameters $x$ while low values of $\g$ allow the inner SGLD to explore further away from $x$. In the context of the discussion in Sec.~\ref{s:local_entropy}, a ``reverse-annealing'' of the scope $\g$, i.e. increasing $\g$ as training progresses has the effect of exploring the energy landscape on progressively finer scales. We call this process ``scoping'' which is similar to that of~\citet{baldassi2016unreasonable} and use a simple exponential schedule given by
$$
    \g(t) = \g_0\ \rbrac{1 + \g_1}^t;
$$
for the $t^{\trm{th}}$ parameter update. We have experimented with linear, quadratic and bounded exponential ($\g_0 \rbrac{1 - e^{-\tau t}}$) scoping schedules and obtained qualitatively similar results.

Scoping of $\g$ unfortunately interferes with the learning rate annealing that is popular in deep learning, this is a direct consequence of the update step on line 7 of Alg.~\ref{alg:entropysgd}. In practice, we therefore scale down the local entropy gradient by $\g$ before the weight update and modify the line to read
$$
    x \leftarrow x - \eta (x - \mu).
$$
Our general strategy during hyper-parameter tuning is to set the initial scope $\g_0$ to be very small, pick a large value of $\eta$ and set $\g_1$ to be such that the magnitude of the local entropy gradient is comparable to that of SGD. We can use much larger learning rates than SGD in our experiments because the local entropy gradient is less noisy than the original back-propagated gradient. This also enables very fast progress in the beginning with a smooth landscape of a small $\g$.

\subsection{Theoretical Properties}
\label{ss:theoretical_properties}

We can show that $\entropysgd$ results in a smoother loss function and obtains better generalization error than the original objective~\eqref{eq:dnn_objective}. With some overload of notation, we assume that the original loss $f(x)$ is $\b$-smooth, i.e., for all $x,y \in \reals^n$, we have $\norm{\nabla f(x) - \nabla f(y)} \leq \b\ \norm{x-y}$. We additionally assume for the purpose of analysis that no eigenvalue of the Hessian $\nabla^2 f(x)$ lies in the set $[-2 \g-c, c]$ for some small $c > 0$.
\begin{lemma}
\label{lem:smoothness_reduction}
The objective $F(x, \g;\ \Xi)$ in~\eqref{eq:entropysgd_objective} is $\f{\a}{1 + \g^{-1}\ c}$-Lipschitz and $\f{\b}{1 + \g^{-1}\ c}$-smooth.
\end{lemma}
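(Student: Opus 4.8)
The plan is to compute $\nabla_x F(x,\g;\Xi)$ and $\nabla^2_x F(x,\g;\Xi)$ in closed form in terms of the modified Gibbs distribution $\P(x';x,\g)$ of \eqref{eq:entropysgd_modified_gibbs}, and then turn the spectral-gap hypothesis on $\nabla^2 f$ into bounds on their operator norms. For the gradient I would start from the identity already derived in \eqref{eq:entropysgd_grad}, namely $\nabla_x F(x,\g) = \g\,\big(\ag{x';\,x,\g} - x\big)$. Differentiating once more under the integral and using that the $x$-gradient of the exponent in \eqref{eq:entropysgd_modified_gibbs} is $\g(x-x')$, the standard covariance identity for exponential families gives $\nabla_x \ag{x';\,x,\g} = \g\,\S_x$, where $\S_x := \covar_{\P(\cdot;\,x,\g)}(x')$ is the covariance of $x'$ under the modified Gibbs measure; hence $\nabla^2_x F(x,\g) = \g^2\,\S_x - \g I$.

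Next I would replace the modified Gibbs measure by its Laplace/Gaussian approximation about its mode $x^\star(x) = \argmin_{x'}\{f(x') + \tfrac{\g}{2}\norm{x-x'}^2\}$. Under that approximation $\S_x = (\nabla^2 f(x^\star) + \g I)^{-1}$, and differentiating the first-order optimality condition $\nabla f(x^\star) = \g(x - x^\star)$ via the implicit function theorem yields $\nabla_x F(x,\g) = -\g\,(\nabla^2 f + \g I)^{-1}\nabla f(x)$ and $\nabla^2_x F(x,\g) = -\g\,(\nabla^2 f + \g I)^{-1}\nabla^2 f$, with the Hessian of $f$ evaluated near $x$. (Equivalently, one can keep the exact form $\g^2\S_x - \g I$; the only structural fact used is that the relevant matrix is a spectral function of $\nabla^2 f$ shifted by $\g I$.)

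The spectral-gap hypothesis is what converts these expressions into the stated constants. Since every eigenvalue $\l$ of $\nabla^2 f$ satisfies $\l > c$ or $\l < -2\g - c$, in both cases $\abs{\l + \g} > \g + c$, so $\norm{(\nabla^2 f + \g I)^{-1}}_{op} < (\g+c)^{-1}$. Plugging this into $\nabla_x F = -\g(\nabla^2 f + \g I)^{-1}\nabla f$ together with the Lipschitz bound $\norm{\nabla f}\le\a$ gives $\norm{\nabla_x F} < \tfrac{\g\a}{\g+c} = \tfrac{\a}{1+\g^{-1}c}$, i.e.\ $F$ is $\tfrac{\a}{1+\g^{-1}c}$-Lipschitz. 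For smoothness, the eigenvalues of $\nabla^2_x F$ are $-\g\l/(\l+\g)$, whose magnitudes are at most $\g\abs{\l}/\abs{\l+\g} \le \g\b/(\g+c) = \b/(1+\g^{-1}c)$ using $\abs{\l}\le\b$; hence $\norm{\nabla^2_x F}_{op}\le \tfrac{\b}{1+\g^{-1}c}$. All estimates are uniform in $\Xi$, so the same holds for the mini-batch objective.

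I expect the delicate step to be the passage from the exact Gibbs measure to the quadratic/Gaussian picture: when $\nabla^2 f$ has eigenvalues below $-2\g$ the exponent in \eqref{eq:entropysgd_modified_gibbs} is not concave and the integral defining $F$ need not converge globally, so the argument is genuinely local (or must be read with $f$ replaced by its second-order model around the point of interest). Making it fully rigorous would require either restricting to a neighborhood in which the Laplace approximation is quantitatively controlled, or invoking a Brascamp–Lieb-type bound on $\S_x$ — which needs log-concavity and hence covers only the $\l > c$ branch. The remaining content is a short spectral calculation.
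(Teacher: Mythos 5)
Your proposal follows essentially the same route as the paper: a Laplace/saddle-point approximation of the modified Gibbs measure yielding $\nabla F(x,\g) \approx -\rbrac{I + \g^{-1}\nabla^2 f(x)}^{-1}\nabla f(x)$, followed by the identical spectral-gap bound $\norm{\rbrac{I + \g^{-1}\nabla^2 f(x)}^{-1}} \leq \rbrac{1+\g^{-1}c}^{-1}$. If anything your execution is slightly more careful than the paper's --- you bound $\norm{\nabla^2_x F}$ directly via the exact covariance identity $\nabla^2_x F = \g^2 \S_x - \g I$, whereas the paper bounds $\norm{A(x)\nabla f(x) - A(y)\nabla f(y)}$ by $\rbrac{\sup_x \norm{A(x)}}\,\b\,\norm{x-y}$, silently ignoring the variation of $A(x)$ with $x$ --- and your closing caveat about non-convergence of the defining integral when $\nabla^2 f$ has eigenvalues below $-2\g$ is a genuine issue that the paper's heuristic proof also leaves unaddressed.
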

Please see Appendix~\ref{s:app:proofs} for the proof. The local entropy objective is thus smoother than the original objective.
Let us now obtain a bound on the improvement in generalization error. We denote an optimization algorithm, viz., SGD or $\entropysgd$ by $A(\Xi)$, it is a function of the dataset $\Xi$ and outputs the parameters $x^*$ upon termination. Stability of the algorithm~\citep{bousquet2002stability} is then a notion of how much its output differs in loss upon being presented with two datasets, $\Xi$ and $\Xi'$, that differ in at most one sample:
$$
    \sup_{\xi\ \in\ \Xi\ \cup\ \Xi'}\ \sqbrac{ f \rbrac{A(\Xi), \xi} - f \rbrac{A(\Xi'), \xi}} \leq \e.
$$
\citet{hardt2015train} connect uniform stability to generalization error and show that an $\e$-stable algorithm $A(\Xi)$ has generalization error bounded by $\e$, i.e., if $A(\Xi)$ terminates with parameters $x^*$,
$$
    \abs{ \E_{\Xi} \rbrac{R_\Xi(x^*) - R(x^*)} } \leq \e;
$$
where the left hand side is the generalization error: it is the difference between the empirical loss $R_\Xi(x) := N^{-1} \sum_{k=1}^N\ f(x, \xi_k)$ and the population loss $R(x) := \E_{\xi}\ f(x, \xi)$.
We now employ the following theorem that bounds the stability of an optimization algorithm through the smoothness of its loss function and the number of iterations on the training set.
\begin{theorem}[\citet{hardt2015train}]
\label{thm:generalization_bound_nonconvex}
For an $\a$-Lipschitz and $\b$-smooth loss function, if SGD converges in $T$ iterations on $N$ samples with decreasing learning rate $\eta_t \leq 1/t$ the stability is bounded by
\begin{align*}
    \e
    &\lessapprox \f{1}{N}\ \a^{1/(1 + \b)}\ T^{1 - 1/(1 + \b)}.
\end{align*}
\end{theorem}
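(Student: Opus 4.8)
The plan is to reconstruct the coupling argument of~\citet{hardt2015train}. Fix two datasets $\Xi, \Xi'$ differing in a single sample and run SGD on each with a \emph{shared} sequence of sampled indices; write $x_t, x_t'$ for the resulting iterates and $\d_t = \norm{x_t - x_t'}$ for their divergence at step $t$. The entire argument reduces to (i) a one-dimensional recursion controlling $\E\, \d_t$, and (ii) converting a bound on $\d_T$ into a bound on $\abs{f(x_T; \xi) - f(x_T'; \xi)}$ for the worst-case held-out point $\xi$.

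First I would establish the per-step growth inequality. The SGD step at time $t$ is the map $G_t(x) = x - \eta_t\, \nabla f(x; \xi_{i_t})$, which is $(1 + \eta_t \b)$-expansive by $\b$-smoothness: $\norm{G_t(x) - G_t(y)} \leq (1 + \eta_t \b)\, \norm{x - y}$. If the index $i_t$ picked at step $t$ is one of the $N-1$ samples common to both datasets, the two updates use the same function, so $\d_{t+1} \leq (1 + \eta_t \b)\, \d_t$. If $i_t$ hits the single differing sample — an event of probability $1/N$ under uniform sampling — the updates use different functions, and since each gradient has norm at most $\a$ ($\a$-Lipschitzness), splitting off the mismatch gives $\d_{t+1} \leq (1 + \eta_t \b)\, \d_t + 2 \eta_t \a$.

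Next I would use the conditioning device that moves the factor $1/N$ in front of the \emph{additive} term. Introduce a free parameter $t_0$ and condition on the event that the differing sample is not selected during the first $t_0$ steps; on this event $\d_{t_0} = 0$, and the event fails with probability at most $t_0 / N$, on which I bound $\abs{f(x_T;\xi) - f(x_T';\xi)}$ by the range of the loss (the point where an implicit boundedness assumption on $f$ is used). On the good event, taking expectations over the random index gives $\E\, \d_{t+1} \leq (1 + \eta_t \b)\, \E\, \d_t + \tfrac{2 \eta_t \a}{N}$ for $t \geq t_0$. With $\eta_t \leq 1/t$ and $1 + u \leq e^u$, unrolling this recursion turns the product $\prod_{s = t+1}^{T}(1 + \eta_s \b)$ into $\exp\!\big(\b \sum_{s=t+1}^T s^{-1}\big) \lesssim (T/t)^{\b}$, and summing $\sum_{t > t_0}(T/t)^{\b}\, \tfrac{2\a}{t N}$ yields $\E[\d_T \mid \d_{t_0} = 0] \lesssim \tfrac{\a}{\b N}\,(T/t_0)^{\b}$. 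Combining both events through $\a$-Lipschitzness of $f$ gives
\[
\e \;\lesssim\; \f{t_0}{N} \;+\; \f{\a^2}{\b N}\,\rbrac{\f{T}{t_0}}^{\b},
\]
and choosing $t_0 \asymp (\a^2 / \b)^{1/(1+\b)}\, T^{\b/(1+\b)}$ to balance the two terms collapses this to $\e \lessapprox \tfrac{1}{N}\, \a^{\Theta(1/(1+\b))}\, T^{1 - 1/(1+\b)}$, which is the claimed bound (the $\lessapprox$ absorbing the $\b$-dependent and $O(1)$ constants, as well as the precise power of $\a$).

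The main obstacle is exactly this middle step: squeezing the factor $1/N$ into the additive perturbation rather than letting it multiply the whole bound. This is what the $t_0$-conditioning buys, but it needs care in arguing that $\d_{t_0} = 0$ on the good event (so the first ``hit'' of the differing sample contributes only $2\eta_t \a$, with no expansive amplification because $\d$ is still $0$), in controlling the complementary event via a uniform bound on the loss (an assumption that must be read into the statement for the stability/generalization chain of~\citet{hardt2015train} to close), and in checking that telescoping $\prod(1 + \eta_s \b)$ and then optimizing over $t_0$ genuinely produce the exponent $1 - 1/(1+\b)$ — the ``$+1$'' in $1 + \b$ comes precisely from trading the polynomial growth $(T/t_0)^{\b}$ against the linear-in-$t_0$ error term, and is the easiest place to slip.
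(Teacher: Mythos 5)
The paper does not actually prove this theorem---it is imported from \citet{hardt2015train} as a black-box result (the appendix only proves Lemma~\ref{lem:smoothness_reduction})---so there is no in-paper argument to compare against. Your reconstruction is the standard Hardt--Recht--Singer proof (coupled runs on neighboring datasets, $(1+\eta_t\b)$-expansivity of the gradient step, the $t_0$-conditioning trick that pushes the $1/N$ onto the additive perturbation, unrolling $\prod(1+\eta_s\b)\lesssim (T/t)^{\b}$, and balancing $t_0/N$ against $(\a^2/(\b N))(T/t_0)^{\b}$), and it is correct; the only discrepancies---the implicit boundedness of $f$ needed on the bad event, and the power $\a^{2/(1+\b)}$ you obtain versus the stated $\a^{1/(1+\b)}$---are exactly the ones you flag as absorbed by the paper's $\lessapprox$.
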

Using Lemma~\ref{lem:smoothness_reduction} and Theorem~\ref{thm:generalization_bound_nonconvex} we have
\begin{equation}
    \e_{\ \entropysgd} \lessapprox \rbrac{\a\ T^{-1}}^{\rbrac{1 - \f{1}{1 + \g^{-1}c}}\ \b}\ \e_{\ \textrm{SGD}},
    \label{eq:stability_bound}
\end{equation}
which shows that $\entropysgd$ generalizes better than SGD for all $T > \a$ if they both converge after $T$ passes over the samples.

Let us note that while the number of passes over the dataset for $\entropysgd$ and SGD are similar for our experiments on CNNs, $\entropysgd$ makes only half as many passes as SGD for our experiments on RNNs. As an aside, it is easy to see from the proof of Lemma~\ref{lem:smoothness_reduction} that for a convex loss function $f(x)$, the local entropy objective does not change the minimizer of the original problem.

\begin{remark}
The above analysis hinges upon an assumption that the Hessian $\nabla^2 f(x)$ does not have eigenvalues in the set $[-2\g-c, c]$ for a constant $c > 0$. This is admittedly unrealistic, for instance, the eigenspectrum of the Hessian at a local minimum in Fig.~\ref{fig:lenet_hessian} has a large fraction of its eigenvalues almost zero. Let us however remark that the result in Thm.~\ref{thm:generalization_bound_nonconvex} by~\citet{hardt2015train} assumes global conditions on the smoothness of the loss function; one imagines that Eqn.~\ref{eq:stability_bound} remains qualitatively the same (with respect to $T$ in particular) even if this assumption is violated to an extent before convergence happens. Obtaining a rigorous generalization bound without this assumption would require a dynamical analysis of SGD and seems out of reach currently.
\end{remark}

\section{Experiments}
\label{s:expt}

In Sec.~\ref{ss:expt:universality}, we discuss experiments that suggest that the characteristics of the energy landscape around local minimal accessible by SGD are universal to deep architectures. We then present experimental results on two standard image classification datasets, viz. MNIST and CIFAR-10 and two datasets for text prediction, viz. PTB and the text of War and Peace. Table~\ref{tab:expts} summarizes the results of these experiments on deep networks.

\subsection{Universality of the Hessian at local minima}
\label{ss:expt:universality}

We use automatic differentiation\footnote{\href{https://github.com/HIPS/autograd}{https://github.com/HIPS/autograd}} to compute the Hessian at a local minimum obtained at the end of training for the following networks:

\begin{enumerate}[(i)]
\item \textbf{small-LeNet on MNIST}: This network has $47,658$ parameters and is similar to $\lenet$ but with $10$ and $20$ channels respectively in the first two convolutional layers and $128$ hidden units in the fully-connected layer. We train this with Adam to obtain a test error of $2.4\%$.
\item \textbf{small-$\mnistfc$ on MNIST}: A fully-connected network ($50,890$ parameters) with one layer of $32$ hidden units, ReLU non-linearities and cross-entropy loss; it converges to a test error of $2.5\%$ with momentum-based SGD.
\item \textbf{char-lstm for text generation}: This is a recurrent network with $48$ hidden units and Long Short-Term Memory (LSTM) architecture~\citep{hochreiter1997long}. It has $32,640$ parameters and we train it with Adam to re-generate a small piece of text consisting of $256$ lines of length $32$ each and $96$-bit one-hot encoded characters.
\item \textbf{$\allcnn$ on CIFAR-10}: This is similar to the All-CNN-C network~\citep{springenberg2014striving} with $\approx 1.6$ million weights (cf. Sec.~\ref{ss:expt:cifar}) which we train using Adam to obtain an error of $11.2\%$. Exact Hessian computation is in this case expensive and thus we instead compute the diagonal of the Fisher information matrix~\citep{wasserman2013all} using the element-wise first and second moments of the gradients that Adam maintains, i.e., $\trm{diag} (I) = \E(g^2) - (\E\ g)^2$ where $g$ is the back-propagated gradient. Fisher information measures the sensitivity of the log-likelihood of data given parameters in a neighborhood of a local minimum and thus is exactly equal to the Hessian of the negative log-likelihood. We will consider the diagonal of the empirical Fisher information matrix as a proxy for the eigenvalues of the Hessian, as is common in the literature.
\end{enumerate}

We choose to compute the exact Hessian and to keep the computational and memory requirements manageable, the first three networks considered above are smaller than standard deep networks used in practice. For the last network, we sacrifice the exact computation and instead approximate the Hessian of a large deep network. We note that recovering an approximate Hessian from Hessian-vector products~\citep{pearlmutter1994fast} could be a viable strategy for large networks.

Fig.~\ref{fig:lenet_hessian} in the introductory Sec.~\ref{s:intro} shows the eigenspectrum of the Hessian for small-LeNet while Fig.~\ref{fig:universality} shows the eigenspectra for the other three networks.
A large proportion of eigenvalues of the Hessian are very close to zero or positive with a very small (relative) magnitude. This suggests that the local geometry of the energy landscape is almost flat at local minima discovered by gradient descent. This agrees with theoretical results such as~\citet{baldassi2016local} where the authors predict that flat regions of the landscape generalize better. Standard regularizers in deep learning such as convolutions, max-pooling and dropout seem to bias SGD towards flatter regions in the energy landscape.
Away from the origin, the right tails of the eigenspectra are much longer than the left tails. Indeed, as discussed in numerous places in literature~\citep{Bray2007,dauphin2014identifying,spinglass2015}, SGD finds low-index critical points, i.e., optimizers with few negative eigenvalues of the Hessian. What is interesting and novel is that the directions of descent that SGD misses do not have a large curvature.

\begin{figure}[!tbh]
\centering
    \begin{subfigure}[t]{0.44\textwidth}
        \includegraphics[width=1.08\textwidth]{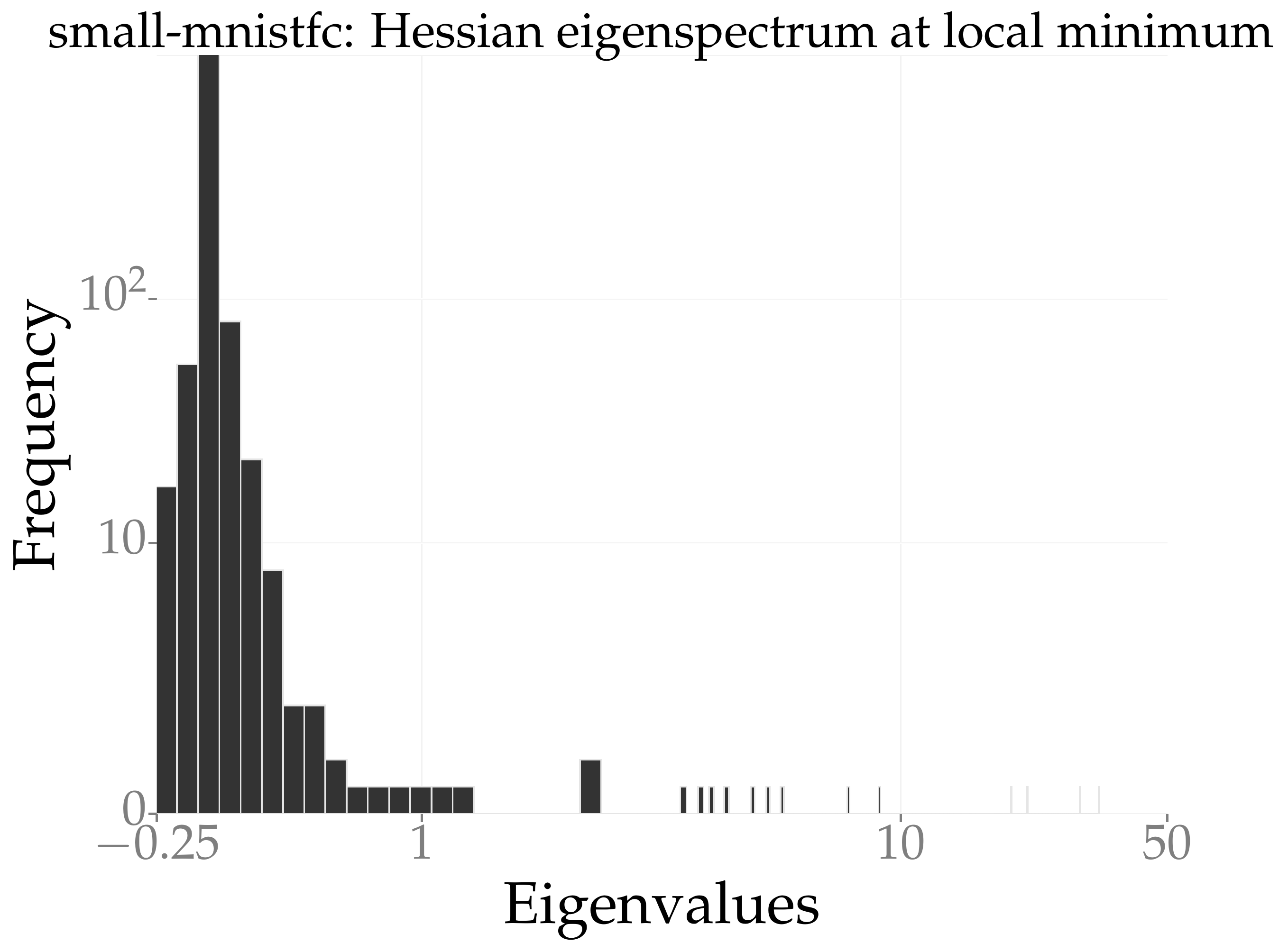}
        \caption{\small $\smallmnistfc$ ($2$ runs): Peak (clipped here) at zero ($\abs{\lambda} \leq 10^{-2}$) accounts for $90\%$ of the entries.\vspace{0.15in}}
        \label{fig:mnistfc_hessian}
    \end{subfigure}
    \hspace{0.2in}
    \begin{subfigure}[t]{0.44 \textwidth}
        \includegraphics[width=\textwidth]{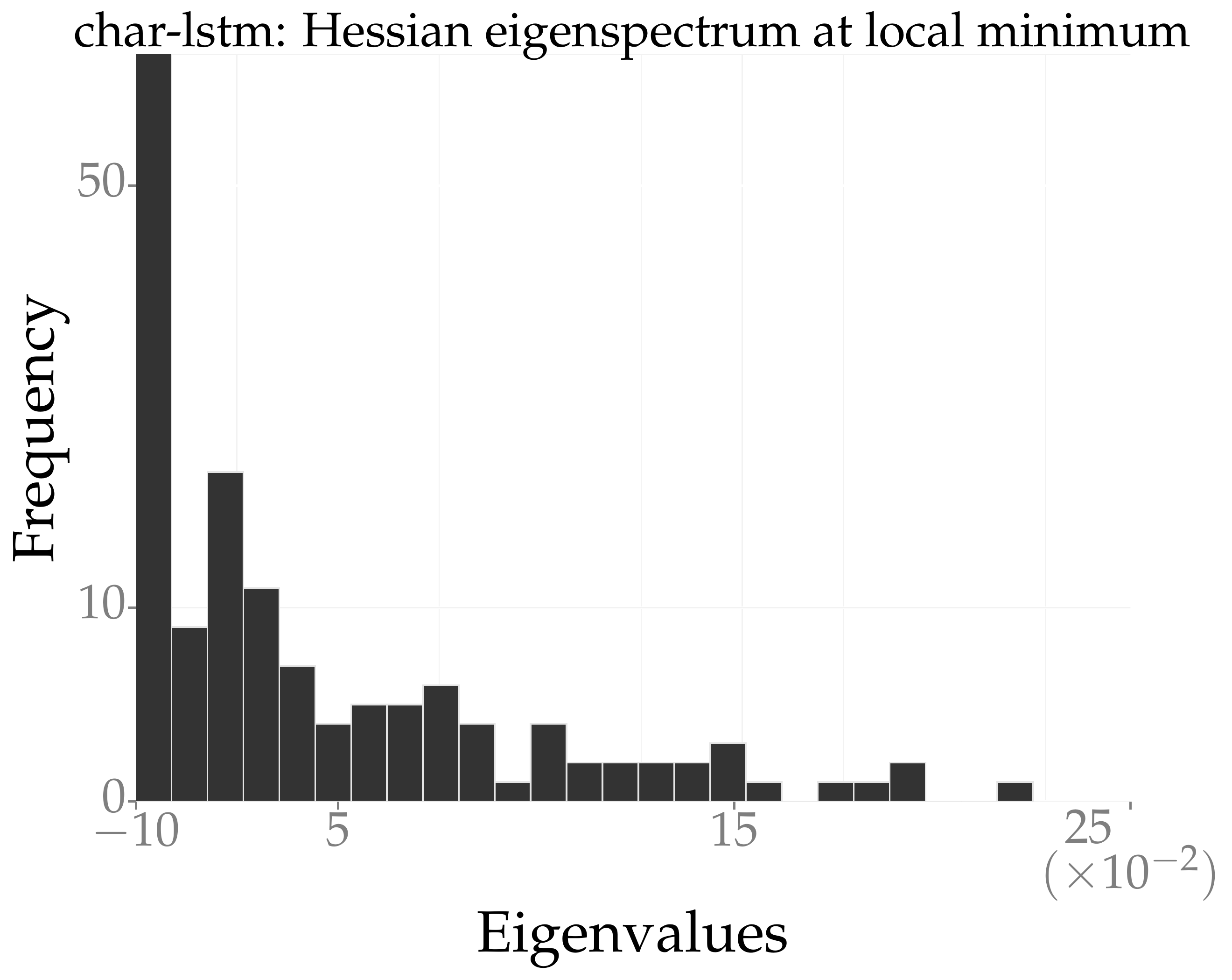}
        \caption{\small $\charlstm$ ($5$ runs): Almost $95\%$ eigenvalues have absolute value below $10^{-5}$.}
        \label{fig:charlstm_hessian}
    \end{subfigure}
    \begin{subfigure}[b]{\textwidth}
        \centering
        \begin{subfigure}[b]{0.44 \textwidth}
        \includegraphics[width=\textwidth]{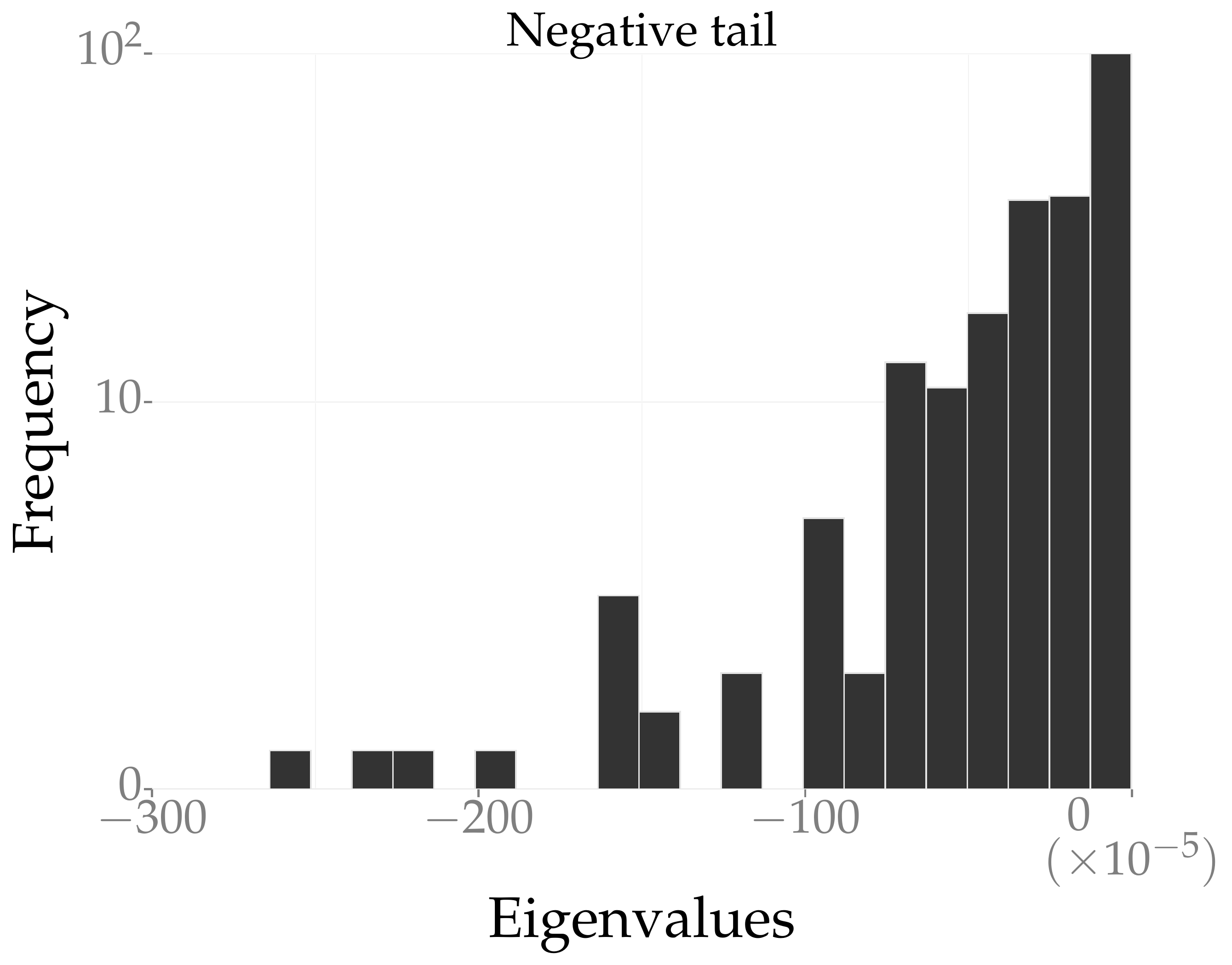}
        \end{subfigure}
        \hspace{0.2in}
        \begin{subfigure}[b]{0.44 \textwidth}
        \includegraphics[width=\textwidth]{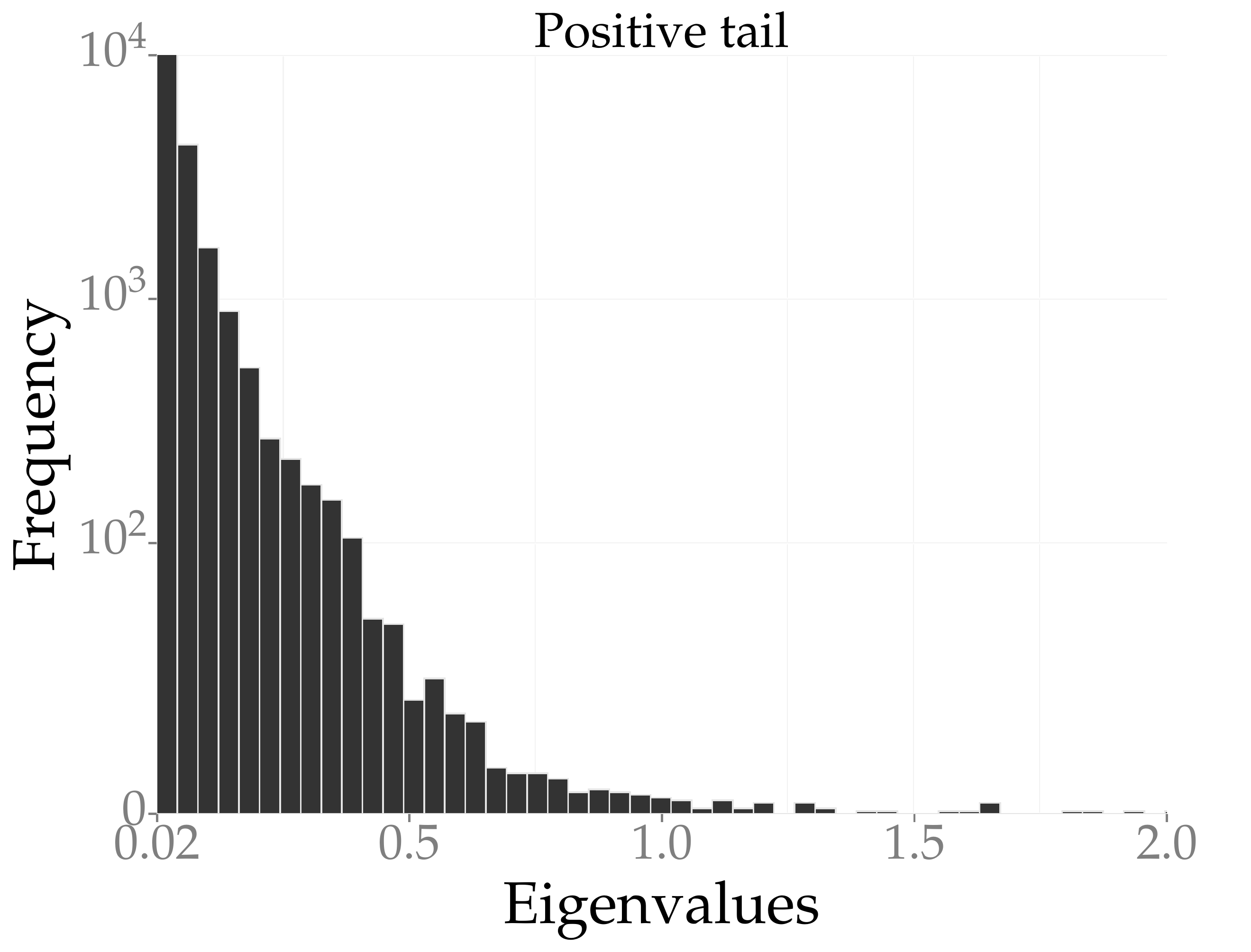}
        \end{subfigure}
    \caption{\small Negative and positive eigenvalues of the Fisher information matrix of $\allcnn$ at a local minimum ($4$ independent runs). The origin has a large peak with $\approx 95\%$ near-zero ($\abs{\lambda} \leq 10^{-5}$) eigenvalues (clipped here).}
    \label{fig:allcnn_hessian}
    \end{subfigure}
\caption{\small Universality of the Hessian: for a wide variety of network architectures, sizes and datasets, optima obtained by SGD are mostly flat (large peak near zero), they always have a few directions with large positive curvature (long positive tails). A very small fraction of directions have negative curvature, and the magnitude of this curvature is extremely small (short negative tails).}
\label{fig:universality}
\end{figure}

\subsection{MNIST}
\label{ss:expt:mnist}

We consider two prototypical networks: the first, called $\mnistfc$, is a fully-connected network with two hidden layers of $1024$ units each and the second is a convolutional neural network with the same size as $\lenet$ but with batch-normalization~\citep{ioffe2015batch}; both use a dropout of probability $0.5$ after each layer. As a baseline, we train for $100$ epochs with Adam and a learning rate of $10^{-3}$ that drops by a factor of $5$ after every $30$ epochs to obtain an average error of $1.39 \pm 0.03\%$ and $0.51 \pm 0.01 \%$ for $\mnistfc$ and $\lenet$ respectively, over $5$ independent runs.

For both these networks, we train $\entropysgd$ for $5$ epochs with $L = 20$ and reduce the dropout probability ($0.15$ for $\mnistfc$ and $0.25$ for $\lenet$). The learning rate of the SGLD updates is fixed to $\eta' = 0.1$ while the outer loop's learning rate is set to $\eta=1$ and drops by a factor of $10$ after the second epoch; we use Nesterov's momentum for both loops. The thermal noise in SGLD updates (line 5 of Alg.~\ref{alg:entropysgd}) is set to $10^{-3}$. We use an exponentially increasing value of $\g$ for scoping, the initial value of the scope is set to $\g = 10^{-4}$ and this increases by a factor of $1.001$ after each parameter update. The results in Fig.~\ref{fig:mnistfc_test} and Fig.~\ref{fig:lenet_test} show that $\entropysgd$ obtains a comparable generalization error: $1.37 \pm 0.03 \%$ and $0.50 \pm 0.01 \%$, for $\mnistfc$ and $\lenet$ respectively. While $\entropysgd$ trains slightly faster in wall-clock time for $\lenet$; it is marginally slower for $\mnistfc$ which is a small network and trains in about two minutes anyway.

\textbf{Remark on the computational complexity:} Since $\entropysgd$ runs $L$ steps of SGLD before each parameter update, the effective number of passes over the dataset is $L$ times that of SGD or Adam for the same number of parameter updates. We therefore plot the error curves of $\entropysgd$ in Figs.~\ref{fig:mnist_test},~\ref{fig:allcnn}, and~\ref{fig:rnn} against the ``effective number of epochs'', i.e. by multiplying the abscissae by a factor of $L$. (we define $L = 1$ for SGD or Adam). Modulo the time required for the actual parameter updates (which are fewer for $\entropysgd$), this is a direct measure of wall-clock time, agnostic to the underlying hardware and implementation.

\begin{figure}[htp!]
\centering
    \begin{subfigure}[t]{0.45\textwidth}
        \centering
        \includegraphics[width=1.02\textwidth]{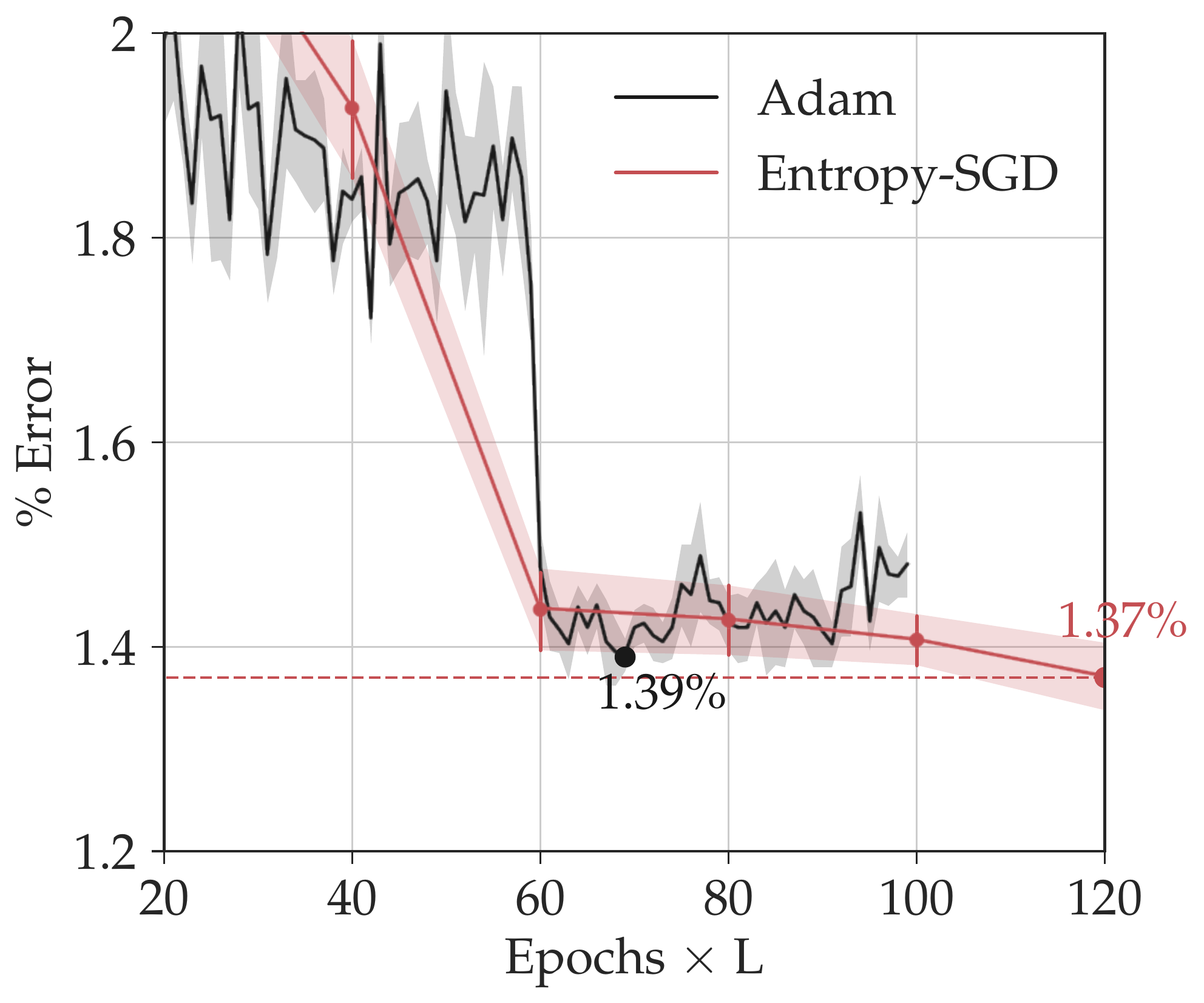}
        \caption{\small $\mnistfc$: Validation error}
        \label{fig:mnistfc_test}
    \end{subfigure}
    \hspace{0.2in}
    \begin{subfigure}[t]{0.45\textwidth}
        \centering
        \includegraphics[width=\textwidth]{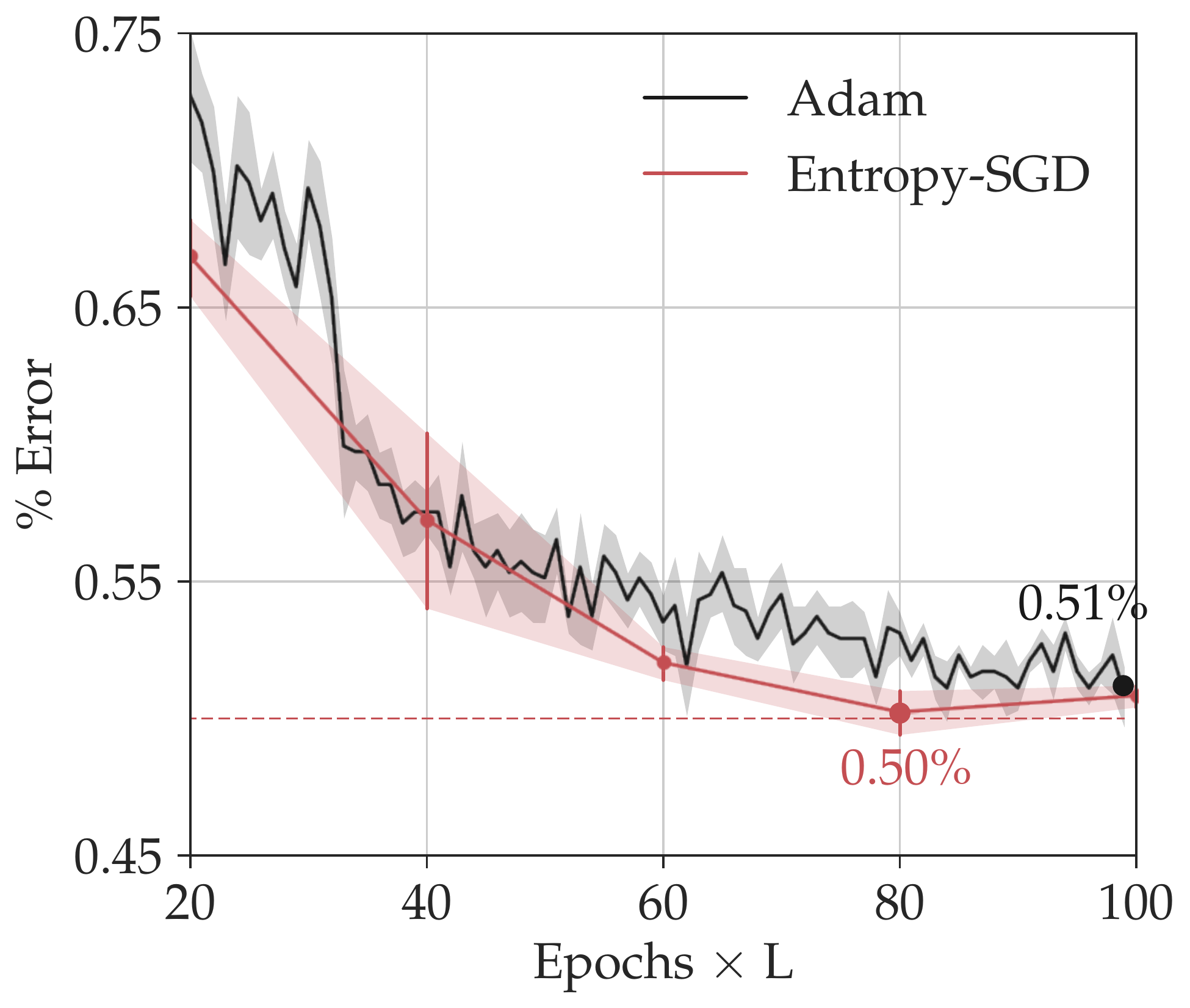}
        \caption{\small $\lenet$: Validation error}
        \label{fig:lenet_test}
    \end{subfigure}
\caption{\small Comparison of $\entropysgd$ vs. Adam on MNIST\vspace*{0.15in}}
\label{fig:mnist_test}
\end{figure}

\subsection{CIFAR-10}
\label{ss:expt:cifar}

We train on CIFAR-10 without data augmentation after performing global contrast normalization and ZCA whitening~\citep{goodfellow2013maxout}. We consider the All-CNN-C network of~\citet{springenberg2014striving} with the only difference that a batch normalization layer is added after each convolutional layer; all other architecture and hyper-parameters are kept the same. We train for $200$ epochs with SGD and Nesterov's momentum during which the initial learning rate of $0.1$ decreases by a factor of $5$ after every $60$ epochs. We obtain an average error of $7.71 \pm 0.19\%$ in $200$ epochs vs.\ $9.08\%$ error in $350$ epochs that the authors in~\citet{springenberg2014striving} report and this is thus a very competitive baseline for this network. Let us note that the best result in the literature on non-augmented CIFAR-10 is the ELU-network by~\citet{clevert2015fast} with $6.55\%$ test error.

We train $\entropysgd$ with $L = 20$ for $10$ epochs with the original dropout of $0.5$. The initial learning rate of the outer loop is set to $\eta = 1$ and drops by a factor of $5$ every $4$ epochs, while the learning rate of the SGLD updates is fixed to $\eta' = 0.1$ with thermal noise $\e = 10^{-4}$. As the scoping scheme, we set the initial value of the scope to $\g_0 = 0.03$ which increases by a factor of $1.001$ after each parameter update.
Fig.~\ref{fig:allcnn} shows the training and validation error curves for $\entropysgd$ compared with SGD. It shows that local entropy performs as well as SGD on a large CNN; we obtain a validation error of $7.81 \pm 0.09 \%$ in about $160$ effective epochs.

We see almost no plateauing of training loss or validation error for $\entropysgd$ in Fig.~\ref{fig:allcnn_loss}; this trait is shared across different networks and datasets in our experiments and is an indicator of the additional smoothness of the local entropy landscape coupled with a good scoping schedule for $\g$.

\begin{figure}[htp!]
\centering
    \begin{subfigure}[b]{0.45\textwidth}
        \centering
        \includegraphics[width=1.06\textwidth]{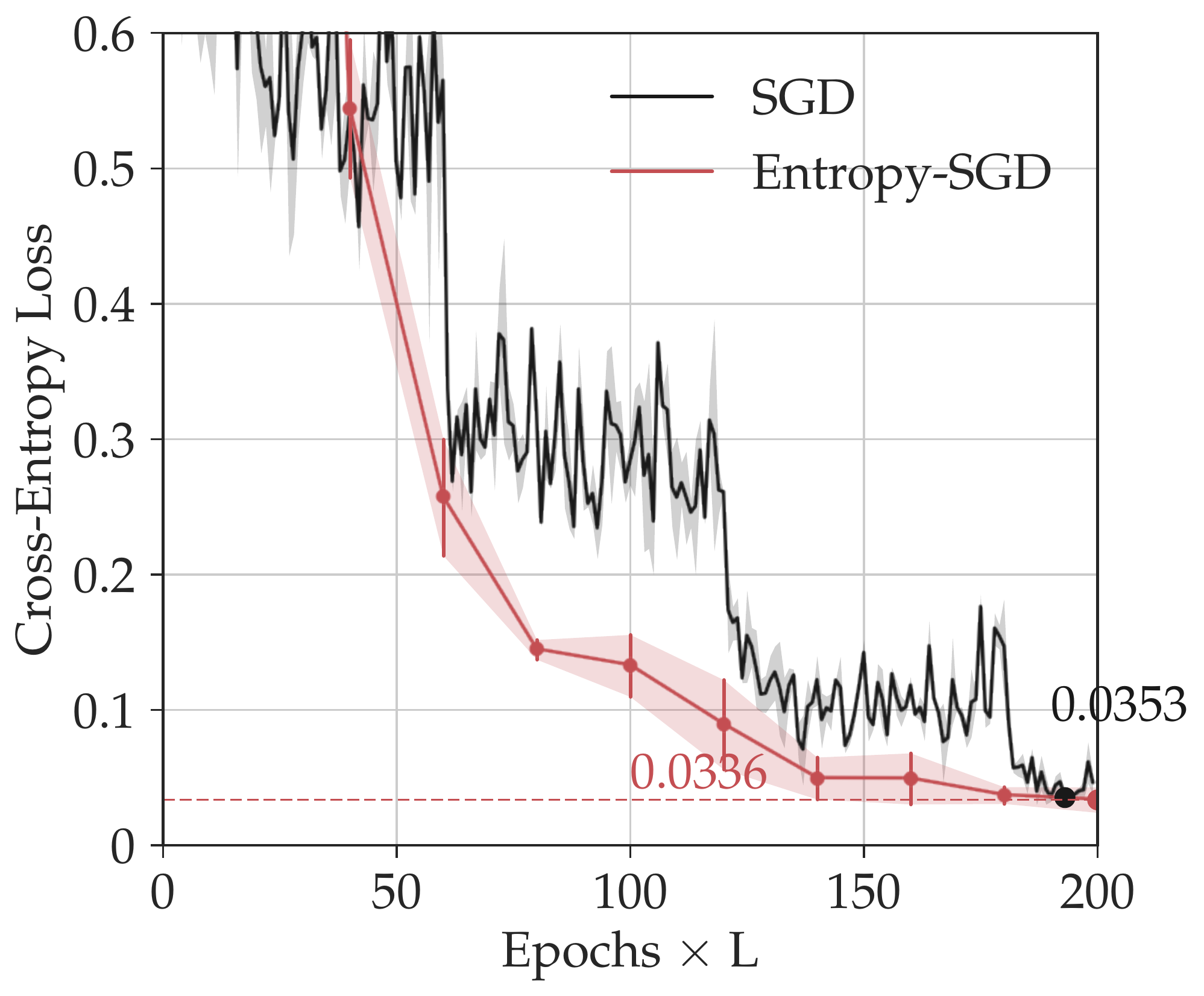}
        \caption{\small $\allcnn$: Training loss}
        \label{fig:allcnn_loss}
    \end{subfigure}
    \hspace{0.2in}
    \begin{subfigure}[b]{0.45\textwidth}
        \centering
        \includegraphics[width=\textwidth]{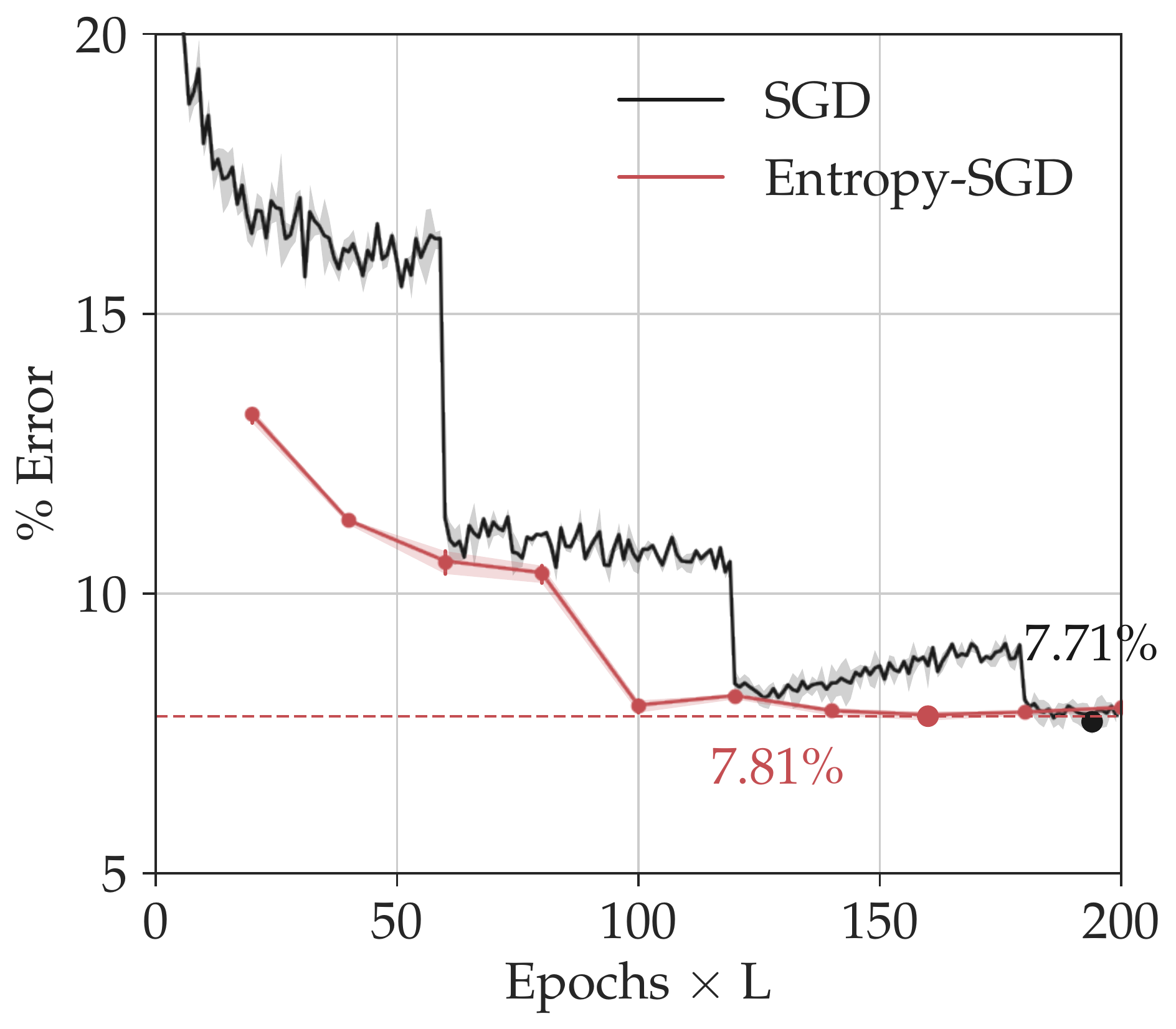}
        \caption{\small $\allcnn$: Validation error}
        \label{fig:allcnn_valid}
    \end{subfigure}
\caption{\small Comparison of $\entropysgd$ vs. SGD on CIFAR-10}
\label{fig:allcnn}
\end{figure}

\subsection{Recurrent neural networks}
\label{ss:expt:rnn}

\subsubsection{Penn Tree Bank}

We train an LSTM network on the Penn Tree Bank (PTB) dataset for word-level text prediction. This dataset contains about one million words divided into a training set of about $930,000$ words, a validation set of $74,000$ words and $82,000$ words with a vocabulary of size $10,000$. Our network called $\ptblstm$ consists of two layers with $1500$ hidden units, each unrolled for $35$ time steps; note that this is a large network with about $66$ million weights. We recreated the training pipeline of~\citet{zaremba2014recurrent} for this network (SGD without momentum) and obtained a word perplexity of $81.43 \pm 0.2$ on the validation set and $78.6 \pm 0.26$ on the test set with this setup; these numbers closely match the results of the original authors.

We run $\entropysgd$ on $\ptblstm$ for $5$ epochs with $L = 5$, note that this results in only $25$ effective epochs. We do not use scoping for this network and instead fix $\g = 10^{-3}$. The initial learning rate of the outer loop is $\eta = 1$ which reduces by a factor of $10$ at each epoch after the third epoch. The SGLD learning rate is fixed to $\eta' = 1$ with $\e = 10^{-4}$. We obtain a word perplexity of $80.116 \pm 0.069$ on the validation set and $77.656 \pm 0.171$ on the test set. As Fig.~\ref{fig:ptb_valid} shows, $\entropysgd$ trains significantly faster than SGD ($25$ effective epochs vs. $55$ epochs of SGD) and also achieves a slightly better generalization perplexity.

\subsubsection{char-LSTM on War and Peace}

Next, we train an LSTM to perform character-level text-prediction. As a dataset, following the experiments of~\citet{karpathy2015visualizing}, we use the text of War and Peace by Leo Tolstoy which contains about $3$ million characters divided into training ($80\%$), validation ($10\%$) and test ($10\%$) sets. We use an LSTM consisting of two layers of $128$ hidden units unrolled for $50$ time steps and a vocabulary of size $87$. We train the baseline with Adam for $50$ epochs with an initial learning rate of $0.002$ that decays by a factor of $2$ after every $5$ epochs to obtain a validation perplexity of $1.224 \pm 0.008$ and a test perplexity of $1.226 \pm 0.01$.

As noted in Sec.~\ref{ss:alg}, we can easily wrap Alg.~\ref{alg:entropysgd} inside other variants of SGD such as Adam; this involves simply substituting the local entropy gradient in place of the usual back-propagated gradient. For this experiment, we constructed $\entropyadam$ which is equivalent to Adam with the local entropy gradient (which is computed via SGLD). We run $\entropyadam$ for $5$ epochs with $L = 5$ and a fixed $\g = 0.01$ with an initial learning rate of $0.01$ that decreases by a factor of $2$ at each epoch. Note that this again results in only $25$ effective epochs, i.e. half as much wall-clock time as SGD or Adam. We obtain a validation perplexity of $1.213 \pm 0.007$ and a test perplexity of $1.217 \pm 0.005$ over $4$ independent runs which is better than the baseline. Fig.~\ref{fig:char_valid} shows the error curves for this experiment.

\begin{figure}[htp!]
\centering
    \begin{subfigure}[b]{0.45\textwidth}
        \centering
        \includegraphics[width=1.03\textwidth]{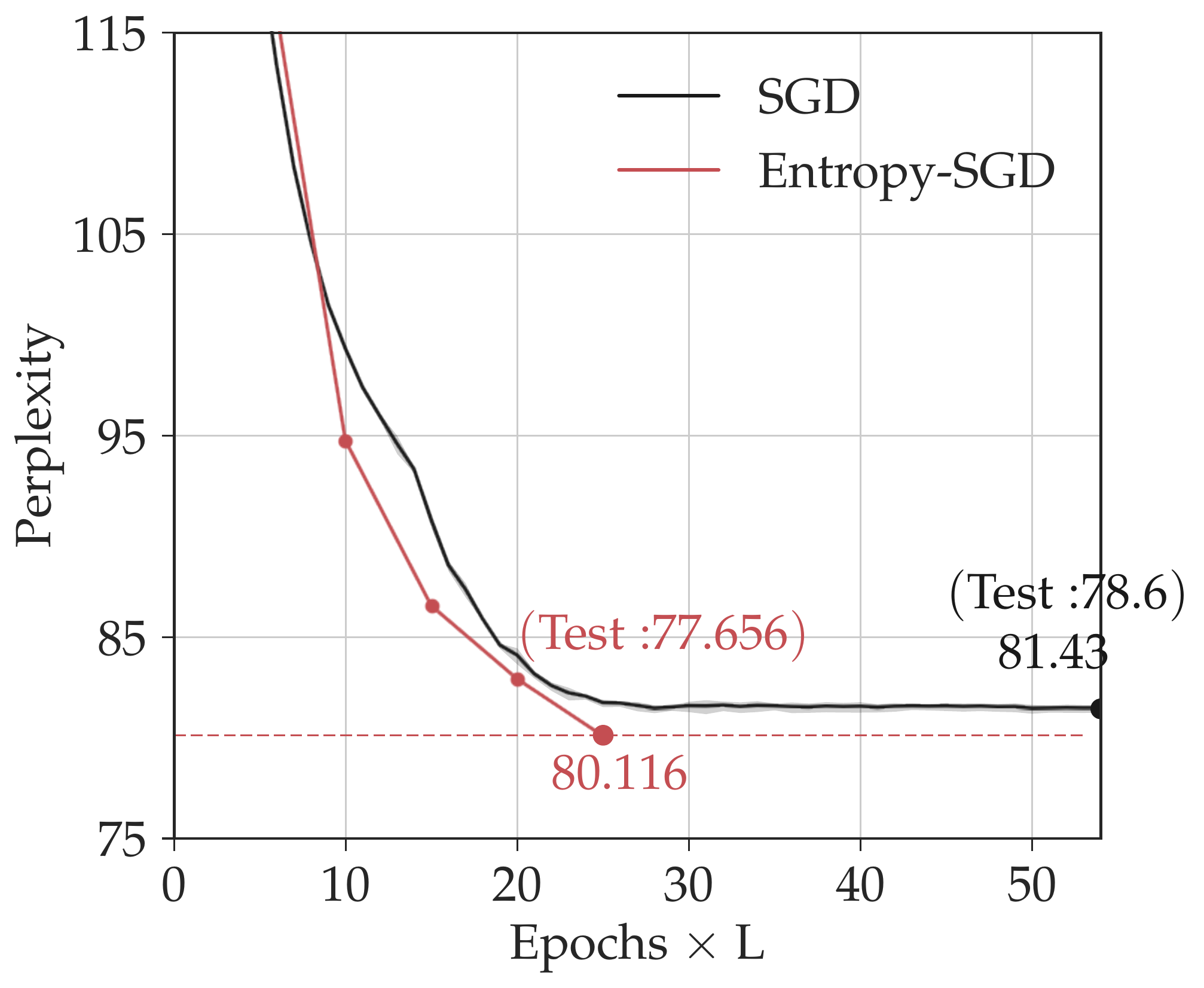}
        \caption{\small $\ptblstm$: Validation, test perplexity}
        \label{fig:ptb_valid}
    \end{subfigure}
    \hspace{0.2in}
    \begin{subfigure}[b]{0.45\textwidth}
        \centering
        \includegraphics[width=\textwidth]{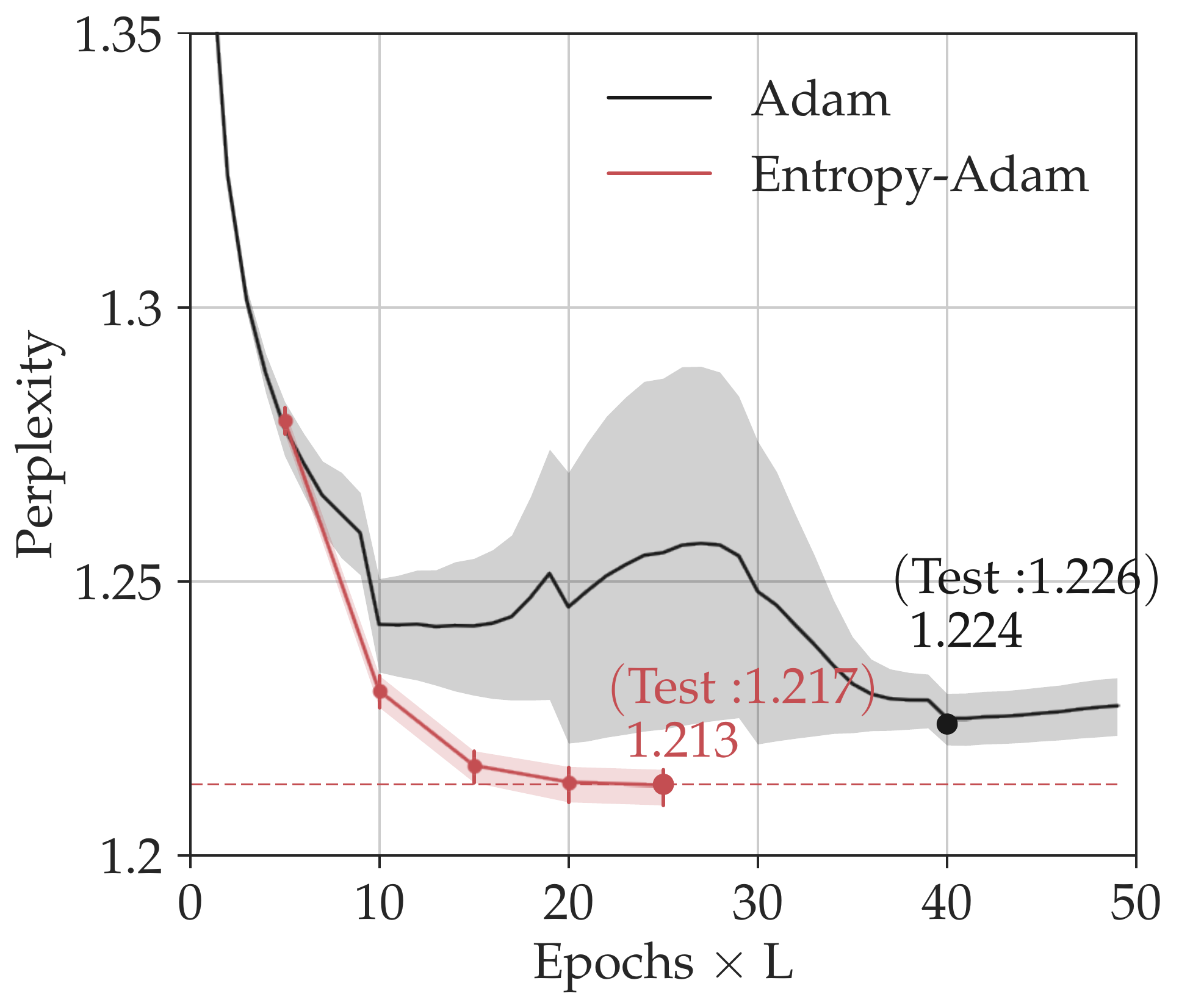}
        \caption{\small $\charlstm$: Validation, test perplexity}
        \label{fig:char_valid}
    \end{subfigure}
\caption{\small Comparison of $\entropysgd$ vs. SGD / Adam on RNNs}
\label{fig:rnn}
\end{figure}

{
\setlength{\heavyrulewidth}{1.5pt}
\renewcommand{\arraystretch}{1.4}
\begin{table}[H]
\centering
\resizebox{0.85 \columnwidth}{!}
{
\large
\begin{tabular}{p{4cm} | c c | c c}
\toprule
    \rowcolor{gray!15} Model & \multicolumn{2}{c}{Entropy-SGD} & \multicolumn{2}{c}{SGD / Adam}\\
\toprule
    & Error (\%) / Perplexity & Epochs & Error (\%) / Perplexity & Epochs\\[0.05in]
    \rowcolor{gray!15} $\mnistfc$ & $1.37 \pm 0.03$ & $120$ & $1.39 \pm 0.03$ & $66$\\
    $\lenet$ & $0.5 \pm 0.01$ & $80$ & $0.51 \pm 0.01$ & $100$\\
    \rowcolor{gray!15} $\allcnn$ & $7.81 \pm 0.09$ & $160$ & $7.71 \pm 0.19$ & $180$\\
    $\ptblstm$ & $77.656 \pm 0.171$ & $25$ & $78.6 \pm 0.26$ & $55$\\
    \rowcolor{gray!15} $\charlstm$ & $1.217 \pm 0.005$ & $25$ & $1.226 \pm 0.01$ & $40$\\
\bottomrule
\end{tabular}
}
\caption{\small Experimental results: $\entropysgd$ vs. SGD / Adam}
\label{tab:expts}
\end{table}
}

Tuning the momentum in $\entropysgd$ was crucial to getting good results on RNNs. While the SGD baseline on $\ptblstm$ does not use momentum (and in fact, does not train well with momentum) we used a momentum of $0.5$ for $\entropysgd$. On the other hand, the baseline for $\charlstm$ was trained with Adam with $\b_1 = 0.9$ ($\b_1$ in Adam controls the moving average of the gradient) while we set $\b_1 = 0.5$ for $\entropyadam$. In contrast to this observation about RNNs, all our experiments on CNNs used a momentum of $0.9$. In order to investigate this difficulty, we monitored the angle between the local entropy gradient and the vanilla SGD gradient during training. This angle changes much more rapidly for RNNs than for CNNs which suggests a more rugged energy landscape for the former; local entropy gradient is highly uncorrelated with the SGD gradient in this case.

\section{Discussion}
\label{s:discussion}

In our experiments, $\entropysgd$ results in generalization error comparable to SGD, but always with lower cross-entropy loss on the training set. This suggests the following in the context of energy landscapes of deep networks. Roughly, wide valleys favored by $\entropysgd$ are located deeper in the landscape with a lower empirical loss than local minima discovered by SGD where it presumably gets stuck. Such an interpretation is in contrast to theoretical models of deep networks (cf.\@ Sec.~\ref{s:prior_work}) which predict multiple equivalent local minima with the same loss. Our work suggests that geometric properties of the energy landscape are crucial to generalize well and provides algorithmic approaches to exploit them. However, the literature lacks general results about the geometry of the loss functions of deep networks --- convolutional neural networks in particular --- and this is a promising direction for future work.

If we focus on the inner loop of the algorithm, SGLD updates compute the average gradient (with Langevin noise) in a neighborhood of the parameters while maintaining the Polyak average of the new parameters. Such an interpretation is very close to averaged SGD of~\citet{polyak1992acceleration} and~\citet{bottou2012stochastic} and worth further study. Our experiments show that while $\entropysgd$ trains significantly faster than SGD for recurrent networks, it gets relatively minor gains in terms of wall-clock time for CNNs. Estimating the gradient of local entropy cheaply with few SGLD iterations, or by using a smaller network to estimate it in a teacher-student framework~\citep{balan2015bayesian} is another avenue for extensions to this work.

\section{Conclusions}
\label{s:conclusions}

We introduced an algorithm named $\entropysgd$ for optimization of deep networks. This was motivated from the observation that the energy landscape near a local minimum discovered by SGD is almost flat for a wide variety of deep networks irrespective of their architecture, input data or training methods. We connected this observation to the concept of local entropy which we used to bias the optimization towards flat regions that have low generalization error. Our experiments showed that $\entropysgd$ is applicable to large convolutional and recurrent deep networks used in practice.

\section{Acknowledgments}
\label{s:acknowledgements}

This work was supported by ONR N00014-13-1-034, AFOSR F9550-15-1-0229 and ARO W911NF-15-1-0564/66731-CS.

{
\footnotesize
\bibliographystyle{iclr2017_conference}
\bibliography{chaudhari.choromanska.ea.iclr17}
}

\begin{appendices}

\renewcommand\thetable{\thesection\arabic{table}}
\renewcommand\thefigure{\thesection\arabic{figure}}

\clearpage
\section{Stochastic gradient Langevin dynamics (SGLD)}
\label{s:app:langevin}

Local entropy in Def.~\eqref{def:local_entropy} is an expectation over the entire configuration space $x \in \reals^n$ and is hard to compute; we can however approximate its gradient using Markov chain Monte-Carlo (MCMC) techniques. In this section, we briefly review stochastic gradient Langevin dynamics~\citep{welling2011bayesian} that is an MCMC algorithm designed to draw samples from a Bayesian posterior and scales to large datasets using mini-batch updates.

For a parameter vector $x \in \reals^n$ with a prior distribution $p(x)$ and if the probability of generating a data item $\xi_k$ given a model parameterized by $x$ is $p(\xi_k \given x)$, the posterior distribution of the parameters based on $N$ data items can be written as
\begin{equation}
    p\rbrac{x \given \xi_{\ k \leq N}}\ \propto\ p(x)\ \prod_{k=1}^N\ p\rbrac{\xi_k \given x}.
    \label{eq:bayesian_posterior}
\end{equation}
Langevin dynamics~\citep{neal2011mcmc} injects Gaussian noise into maximum-a-posteriori (MAP) updates to prevent over-fitting the solution $x^*$ of the above equation. The updates can be written as
\begin{equation}
    \Delta x_t = \f{\eta}{2}\ \rbrac{ \nabla \log p(x_t) + \sum_{k=1}^N\ \nabla p(\xi_k \given x_t)} + \sqrt{\eta}\ \e_t;
    \label{eq:langevin_bayesian}
\end{equation}
where $\e_t \sim \trm{N}(0, \e^2)$ is Gaussian noise and $\eta$ is the learning rate. In this form, Langevin dynamics faces two major hurdles for applications to large datasets. First, computing the gradient $\sum_{k=1}^N\ \nabla p(\xi_k \given x_t)$ over all samples for each update $\Delta x_t$ becomes prohibitive. However, as~\citet{welling2011bayesian} show, one can instead simply use the average gradient over $m$ data samples (mini-batch) as follows:
\begin{equation}
    \Delta x_t = \f{\eta_t}{2}\ \rbrac{ \nabla \log p(x_t) + \f{N}{m}\ \sum_{k=1}^m\ \nabla p(\xi_k \given x_t)} + \sqrt{\eta_t}\ \e_t.
    \label{eq:sgld_bayesian}
\end{equation}
Secondly, Langevin dynamics in~\eqref{eq:langevin_bayesian} is the discrete-time approximation of a continuous-time stochastic differential equation~\citep{mandt2016variational} thereby necessitating a Metropolis-Hastings (MH) rejection step~\citep{roberts2002langevin} which again requires computing $p(\xi_k \given x)$ over the entire dataset. However, if the learning rate $\eta_t \to 0$, we can also forgo the MH step~\citep{chen2014stochastic}. \citet{welling2011bayesian} also argue that the sequence of samples $x_t$ generated by updating~\eqref{eq:sgld_bayesian} converges to the correct posterior~\eqref{eq:bayesian_posterior} and one can hence compute the statistics of any function $g(x)$ of the parameters using these samples. Concretely, the posterior expectation $\E\sqbrac{g(x)}$ is given by
$
    \E\sqbrac{g(x)} \approx \f{\sum_{s=1}^t \eta_t\ g(x_t)}{\sum_{s=1}^t \eta_t};
$
which is the average computed by weighing each sample by the corresponding learning rate in~\eqref{eq:sgld_bayesian}. In this paper, we will consider a uniform prior on the parameters $x$ and hence the first term in~\eqref{eq:sgld_bayesian}, viz., $\nabla \log p(x_t)$ vanishes.

Let us note that there is a variety of increasingly sophisticated MCMC algorithms applicable to our problem, e.g., Stochastic Gradient Hamiltonian Monte Carlo (SGHMC) by~\citet{chen2014stochastic} based on volume preserving flows in the ``parameter-momentum'' space, stochastic annealing thermostats (Santa) by~\citet{chen2015bridging} etc. We can also employ these techniques, although we use SGLD for ease of implementation; the authors in~\citet{ma2015complete} provide an elaborate overview.

\section{Proofs}
\label{s:app:proofs}

\begin{proof}[Proof of Lemma~\ref{lem:smoothness_reduction}]
The gradient $-\nabla F(x)$ is computed in Sec.~\ref{ss:grad_local_entropy} to be $\g\ \rbrac{x - \ag{x';\ \minibatch{\ell}}}$. Consider the term
\begin{align*}
    x - \ag{x';\ x}
    &= x - Z_{x,\g}^{-1}\ \int_{x'}\ x'\ e^{-f(x') - \f{\g}{2}\ \norm{x-x'}^2}\ dx'\\
    &\approx x - Z_{x,\g}^{-1}\ \int_{s}\ (x + s)\ e^{-f(x) - \nabla f(x)^\top s - \f{1}{2}\ s^\top \rbrac{\g + \nabla^2 f(x)} s}\ ds\\
    &= x\rbrac{1 -  Z_{x,\g}^{-1}\ \int_{s}\ e^{-f(x) - \nabla f(x)^\top s - \f{1}{2}\ s^\top \rbrac{\g + \nabla^2 f(x)} s}\ ds} - Z_{x,\g}^{-1}\ \int_{s}\ s\ e^{-f(x) - \nabla f(x)^\top s - \f{1}{2}\ s^\top \rbrac{\g + \nabla^2 f(x)} s}\ ds\\
    &= -Z_{x,\g}^{-1}\ e^{-f(x)}\ \int_{s}\ s\ e^{-\nabla f(x)^\top s - \f{1}{2}\ s^\top \rbrac{\g + \nabla^2 f(x)} s}\ ds.
\end{align*}
The above expression is the mean of a distribution $\propto e^{-\nabla f(x)^\top s - \f{1}{2}\ s^\top \rbrac{\g + \nabla^2 f(x)} s}$. We can approximate it using the saddle point method as the value of $s$ that minimizes the exponent to get
$$
    x - \ag{x';\ x} \approx \rbrac{\nabla^2 f(x) + \g\ I}^{-1}\ \nabla f(x).
$$
Let us denote $A(x) := \rbrac{I + \g^{-1}\ \nabla^2 f(x)}^{-1}$. Plugging this into the condition for smoothness, we have
\begin{align*}
    \norm{\nabla F(x, \g) - \nabla F(y, \g)}
    &= \norm{A(x)\ \nabla f(x) - A(y)\ \nabla f(y)}\\
    &\leq \rbrac{\sup_x\ \norm{A(x)}}\ \b\ \norm{x-y}.
\end{align*}
Unfortunately, we can only get a uniform bound if we assume that for a small constant $c > 0$, no eigenvalue of $\nabla^2 f(x)$ lies in the set $[-2 \g -c, c]$. This gives
$$
    \rbrac{\sup_x\ \norm{A(x)}} \leq \f{1}{1+\g^{-1}\ c}.
$$
This shows that a smaller value of $\g$ results in a smoother energy landscape, except at places with very flat directions. The Lipschitz constant also decreases by the same factor.
\end{proof}

\section{Connection to variational inference}
\label{s:app:connection_variational}

\newcommand{\kl}{\trm{KL}}
The fundamental motivations of (stochastic) variational inference (SVI) and local entropy are similar: they both aim to generalize well by constructing a distribution on the weight space. In this section, we explore whether they are related and how one might reconcile the theoretical and algorithmic implications of the local entropy objective with that of SVI.

Let $\Xi$ denote the entire dataset, $z$ denote the weights of a deep neural network and $x$ be the parameters of a variational distribution $q_x(z)$. The Evidence Lower Bound (ELBO) can be then be written as
\beq{
    \log p(\Xi) \geq \E_{z \sim q_x(z)}\ \sqbrac{\log p(\Xi\ |\ z)} - \kl \rbrac{q_x(z)\ ||\ p(z)};
    \label{eq:elbo}
}
where $p(z)$ denotes a parameter-free prior on the weights and controls, through their $\kl$-divergence, how well the posited posterior $q_x(z)$ fits the data. Stochastic variational inference involves maximizing the right hand side of the above equation with respect to $x$ after choosing a suitable prior $p(z)$ and a family of distributions $q_x(z)$. These choices are typically dictated by the ease of sampling $z \sim q_x(z)$, e.g. a mean-field model where $q_x(z)$ factorizes over $z$, and being able to compute the $\kl$-divergence term, e.g. a mixture of Gaussians.

On the other hand, if we define the loss as the log-likelihood of data, viz. $f(z) := -\log p(\Xi | z)$, we can write the logarithm of the local entropy in Eqn.~\eqref{eq:local_entropy} as
\aeq{
    \log F(x, \g) &= \log \int_{z\ \in\ \reals^n}\ \exp \sqbrac{-f(z;\ \Xi) - \f{\g}{2}\ \norm{x-z}^2}\ dz, \notag\\
    &\geq \int_{z\ \in\ \reals^n}\ \sqbrac{\log p(\Xi\ |\ z) - \f{\g}{2}\ \norm{x-z}^2}\ dz;
    \label{eq:logF}
}
by an application of Jensen's inequality. It is thus clear that Eqn.~\eqref{eq:elbo} and~\eqref{eq:logF} are very different in general and one cannot choose a prior, or a variational family, that makes them equivalent and interpret local entropy as ELBO.

Eschewing rigor, formally, if we modify Eqn.~\eqref{eq:elbo} to allow the prior $p(z)$ to depend upon $x$, we can see that the two lower bounds above are equivalent iff $q_x(z)$ belongs to a ``flat variational family'', i.e. uniform distributions with $x$ as the mean and $p_x(z) \propto \exp \rbrac{-\f{\g}{2}\ \norm{x-z}^2}$. We emphasize that the distribution $p_x(z)$ depends on the parameters $x$ themselves and is thus, not really a prior, or one that can be derived using the ELBO.

This ``moving prior'' is absent in variational inference and indeed, a crucial feature of the local entropy objective. The gradient of local entropy in Eqn.~\eqref{eq:entropysgd_grad} clarifies this point:
\aeqs{
    \nabla F(x, \g) &= -\g\ \rbrac{x - \ag{z;\ \Xi}} = -\g\ \E_{z\ \sim r(z; x)}\ \sqbrac{z};
}
where the distribution $r(z; x)$ is given by
$$
    r(z;\ x) \propto\ p(\Xi\ |\ z)\ \exp \rbrac{-\f{\g}{2}\ \norm{x-z}^2};
$$
it thus contains a data likelihood term along with a prior that ``moves'' along with the current iterate $x$.

Let us remark that methods in the deep learning literature that average the gradient through perturbations in the neighborhood of $x$~\citep{mobahi2016training} or noisy activation functions~\citep{gulcehre2016mollifying} can be interpreted as computing the data likelihood in ELBO (without the KL-term); such an averaging is thus different from local entropy.

\subsection{Comparison with SGLD}
\label{ss:app:comparison_sgld}

We use stochastic gradient Langevin dynamics (cf. Appendix~\ref{s:app:langevin}) to estimate the gradient of local entropy in Alg.~\ref{alg:entropysgd}. It is natural then, to ask the question whether vanilla SGLD performs as well as local entropy.
To this end, we compare the performance of SGLD on two prototypical networks: $\lenet$ on MNIST and $\allcnn$ on CIFAR-10. We follow the experiments in~\citet{welling2011bayesian} and~\citet{chen2015bridging} and set the learning rate schedule to be $\eta/(1 + t)^b$ where the initial learning rate $\eta$ and $b$ are hyper-parameters. We make sure that other architectural aspects (dropout, batch-normalization) and regularization (weight decay) are consistent with the experiments in Sec.~\ref{s:expt}.

After a hyper-parameter search, we obtained a test error on $\lenet$ of $0.63 \pm 0.1\%$ after $300$ epochs and $9.89 \pm 0.11\%$ on $\allcnn$ after $500$ epochs. Even if one were to disregard the slow convergence of SGLD, its generalization error is much worse than our experimental results; we get $0.50 \pm 0.01 \%$ on $\lenet$ and $7.81 \pm 0.09 \%$ on $\allcnn$ with $\entropysgd$. For comparison, the authors in~\citet{chen2015bridging} report $0.71\%$ error on MNIST on a slightly larger network.
Our results with local entropy on RNNs are much better than those reported in~\citet{gan2016scalable} for SGLD. On the PTB dataset, we obtain a test perplexity of $77.656 \pm 0.171$ vs. $94.03$ for the same model whereas we obtain a test perplexity of $1.213 \pm 0.007$ vs. $1.3375$ for $\charlstm$ on the War and Peace dataset.

Training deep networks with SGLD, or other more sophisticated MCMC algorithms such as SGHMC, SGNHT etc.~\citep{chen2014stochastic,neal2011mcmc} to errors similar to those of SGD is difficult, and the lack of such results in the literature corroborates our experimental experience. Roughly speaking, local entropy is so effective because it operates on a transformation of the energy landscape that exploits entropic effects. Conventional MCMC techniques such as SGLD or Nose'-Hoover thermostats~\citep{ding2014bayesian} can only trade energy for entropy via the temperature parameter which does not allow the direct use of the geometric information of the energy landscape and does not help with narrow minima.

\end{appendices}

\end{document}